\title{\textbf{Decoder-Only or Encoder-Decoder? Interpreting Language Model as a Regularized Encoder-Decoder}}
\author{
    \textbf{Zihao Fu\textsuperscript{\rm 1}, 
    Wai Lam\textsuperscript{\rm 2}, Qian Yu\textsuperscript{\rm 3}, Anthony Man-Cho So\textsuperscript{\rm 2}}\\ \textbf{Shengding Hu\textsuperscript{\rm 4}, Zhiyuan Liu\textsuperscript{\rm 4},
    Nigel Collier\textsuperscript{\rm 1}}
}
\date{
    \textsuperscript{\rm 1}Language Technology Lab, University of Cambridge,\\
    \textsuperscript{\rm 2}The Chinese University of Hong Kong, 
    \textsuperscript{\rm 3}JD.com\\
    \textsuperscript{\rm 4}Department of Computer Science and Technology, Tsinghua University\\
    \{zf268,nhc30\}@cam.ac.uk,
    \{wlam,manchoso\}@se.cuhk.edu.hk\\
    \{hsd20, liuzy\}@mails.tsinghua.edu.cn, yuqian81@jd.com
}
\begin{document}

\maketitle

\begin{abstract}
  The sequence-to-sequence (seq2seq) task aims at generating the target sequence based on the given input source sequence. Traditionally, most of the seq2seq task is resolved by the Encoder-Decoder framework which requires an encoder to encode the source sequence and a decoder to generate the target text. Recently, a bunch of new approaches has emerged that apply decoder-only language models directly to the seq2seq task. Despite the significant advancements in applying language models to the seq2seq task, there is still a lack of thorough analysis on the effectiveness of the decoder-only language model architecture. This paper aims to address this gap by conducting a detailed comparison between the encoder-decoder architecture and the decoder-only language model framework through the analysis of a regularized encoder-decoder structure. This structure is designed to replicate all behaviors in the classical decoder-only language model but has an encoder and a decoder making it easier to be compared with the classical encoder-decoder structure. Based on the analysis, we unveil the attention degeneration problem in the language model, namely, as the generation step number grows, less and less attention is focused on the source sequence. To give a quantitative understanding of this problem, we conduct a theoretical sensitivity analysis of the attention output with respect to the source input. Grounded on our analysis, we propose a novel partial attention language model to solve the attention degeneration problem. Experimental results on machine translation, summarization, and data-to-text generation tasks support our analysis and demonstrate the effectiveness of our proposed model. 
\end{abstract}

\section{Introduction}
The sequence-to-sequence (seq2seq) task \cite{sutskever2011generating,sutskever2014sequence,cho2014learning,opennmt,ott2019fairseq} sees rapid growth during the past few years. It takes a source sequence as input and generates a corresponding target sequence. Several natural language generation tasks, such as translation, summarization, data-to-text generation, and story generation, naturally fall under the category of seq2seq tasks. Moreover, recently, several other non-generation tasks, including question answering, classification, and etc. \cite{raffel2020exploring}, have also been unified under the seq2seq paradigm. Traditionally, most of the existing seq2seq frameworks have employed the Encoder-Decoder (ED) architecture \cite{cho2014learning,sutskever2014sequence}, where an encoder is responsible for encoding the input data into a hidden space, while a decoder is used to generate the target output text.

\begin{figure}[t]
\centering
\includegraphics[width=0.4\columnwidth]{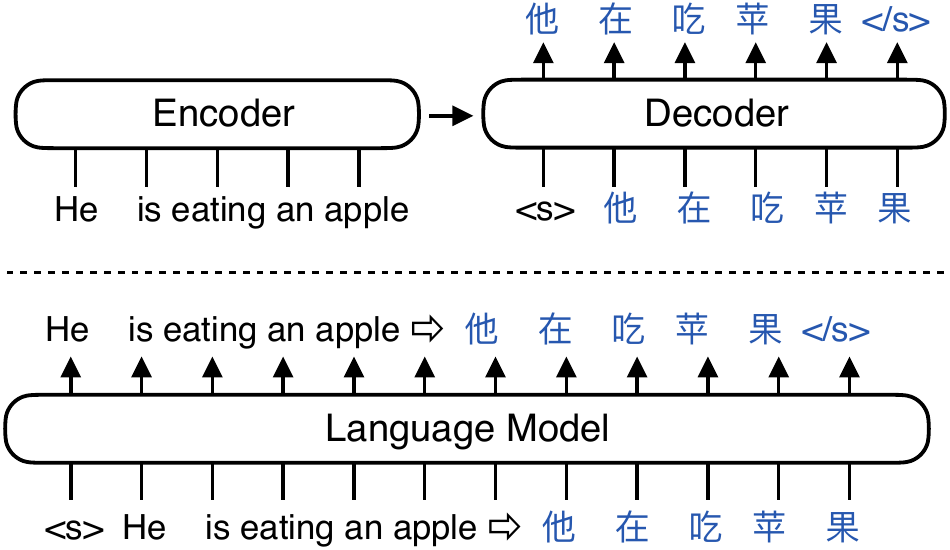}
\caption{Encoder-Decoder (ED) framework and decoder-only Language Model (LM).}
\label{fig:problem}
\end{figure}

Recently, many promising large language models (GPT \cite{radford2018improving}, GPT2 \cite{radford2019language}, GPT3 \cite{brown2020language}, InstructGPT/ChatGPT \cite{ouyang2022training}, Palm \cite{chowdhery2022palm}, OPT \cite{zhang2022opt}, Bloom \cite{scao2022bloom}, Galactica \cite{taylor2022galactica}, Llama \cite{touvron2023llama}) have emerged that directly employ a decoder-only Language Model (LM) \cite{bengio2003neural,mikolov2010recurrent,mikolov2011extensions,mikolov2012context,vig2019analyzing} to solve the seq2seq tasks (\Cref{fig:problem}). It concatenates each source sequence with the corresponding target sequence as a single one and trains the LM on this new collection of sequences. The decoder-only structure \cite{liu2018generating,radford2019language,
dong2019unified,brown2020language,ghosh2017affect} has many obvious advantages over the commonly used ED framework. First, it only has a decoder and thus reduces the model size significantly. Second, LM can be pre-trained on unlabeled text data which is much easier to obtain. Moreover, LM has many good properties including parameter sharing, layer-wise coordination, etc. 

Despite the remarkable achievements of recent large language models, it is still unclear whether applying the LM in the seq2seq task is a promising choice regarding the performance of this task. \citet{liu2018generating} show that the LM gets some gains over the ED structure in the summarization task while \citet{radford2019language,brown2020language} show that the pre-trained LMs can be applied in the unsupervised translation task. On the other hand, \citet{raffel2020exploring} indicate that the LMs, when directly applied in the machine translation tasks, perform worse than the classical ED structure. \citet{deng2023recent} show that ED structure still performs better than decoder-only LM structure. \citet{zhu2020incorporating} indicate that when applying the pre-trained LM to a language domain beyond the corpus for training, no signiﬁcant improvement is observed. \citet{fu2021theoretical} show that LM suffers from the repetition generation problem. Therefore, it motivates us to investigate the deployment and the performance of LMs in seq2seq tasks.

We propose to study the characteristics of LM when applied in the seq2seq task by conducting a detailed contrastive study between the LM and the ED structure. Grounded with our analysis, we unveil the attention degeneration problem and propose a novel partial attention language model to solve it. Specifically, we first propose to analyze a variant of the traditional ED structure named as \textbf{Regularized Encoder-Decoder} (RED) framework. It is designed to replicate all behaviors in classical LM while structured with an encoder and a decoder. This structure facilitates the comparison with the ED structure. By comparing with the RED framework and the ED framework, we find that some parts of the RED structure benefit the seq2seq task while some parts do not. 
Moreover, we find the defects of the LM applied in a seq2seq task are partially caused by the \textbf{Attention Degeneration Problem} (ADP) in its attention component. We conduct a theoretical analysis of this problem by deriving an upper bound of the attention sensitivity and we find that the sensitivity decreases as the length of the generated sequence grows in the LM. 
Based on this analysis, we propose a novel \textbf{Partial Attention Language Model} (PALM) which takes advantage of some LM components while avoiding other unfavorable ones. Our experimental results on machine translation, summarization, and data-to-text generation datasets support the correctness of our analysis and demonstrate the effectiveness of our proposed PALM structure.

Our contributions can be summarized as follows: (1) We conduct a detailed analysis of applying the LM in the seq2seq task. (2) We conduct a theoretical analysis of the attention degeneration problem and carry out extensive experiments to support it. (3) Based on our analysis, we propose PALM to alleviate the weakness of LM, from which the recent LM-based methods in seq2seq tasks can benefit.

\section{Related Works}

Numerous works have recently proposed using LM in the seq2seq task. \citet{liu2018generating} suggest using decoder-only LM to generate summaries of Wikipedia articles, whereas \citet{raffel2020exploring} demonstrate through extensive experiments that LMs perform worse than ED frameworks in machine translation tasks. On the other hand, several works propose using pre-training to enhance performance. For instance, \citet{radford2019language,brown2020language} propose pre-training the GPT LM and fine-tuning it on an unsupervised seq2seq task. UniLM \cite{dong2019unified} adopts the prefix LM \cite{raffel2020exploring}, which uses a fully visible mask of the source sequence instead of the traditional causal mask. Additionally, \citet{conneau2019cross,conneau2020unsupervised} propose the XLM model pre-trained on a monolingual dataset. However, \citet{zhu2020incorporating} show that when applying pre-trained LMs to a language domain beyond the training corpus, there is no significant improvement. Nevertheless, these results heavily rely on the pre-training process, and it remains unclear whether the LM structure itself is suitable for handling the seq2seq task.

Compared to the ED structure, the LM has numerous advantageous features that have been proven to enhance its performance. First, it utilizes the parameter sharing technique \cite{dehghani2018universal,xia2019tied,lan2019albert,conneau2019cross} to share parameters for networks that handle both the source and target sequences. This reduces the model size and improves performance through parameter sharing. Additionally, the LM incorporates a layer-wise coordination mechanism \cite{belinkov2017neural,he2018layer,peters2018deep} that enables the decoder to attend to each corresponding encoder layer's output, allowing it to access various levels of source information. Finally, \citet{dong2021attention} demonstrate that the attention matrix for the decoder-only is a full rank matrix since it is a triangular matrix. This is superior to the encoder-decoder attention matrix, which may not be full rank.

\section{Contrastive Study of Language Model}\label{sec:cslm}

In this section, we conduct a contrastive study of the Language Model (LM) and the traditional Encoder-Decoder (ED) structure. We propose to analyze a Regularized Encoder-Decoder (RED) framework which is designed to replicate all behaviors in the classical LM but with a structure of an encoder and a decoder. We give a detailed explanation of how each component in the RED framework imitates the behaviors in the LM and how it is different from the ED framework. We also illustrate how these components benefit or harm the overall performance. Afterwards, we analyze the attention degeneration problem and propose a theoretical analysis to unveil the cause.

\subsection{Preliminary}\label{sec:pre}
Our analysis is based on the prevalent Transformer architecture \cite{vaswani2017attention,ott2019fairseq}. As illustrated in \Cref{fig:compare}, we show the main components of the models and the detailed model structure can be found in \citet{ott2019fairseq}. In a seq2seq task, we denote the input source sequence as $s=[s_1,s_2,\cdots,s_{|s|}]$ and the target sequence as $t=[t_1,t_2,\cdots,t_{|t|}]$, where $|\cdot|$ is the length of the sequence and $s_i$ is the $i$th source word. The positional token for $s$ and $t$ is denoted as $p_s=[1,2,\cdots,|s|]$ and $p_t=[1,2,\cdots,|t|]$. The attention layer is denoted as $\mathtt{ATT}(Q,K,V)$, and $Z=\mathtt{ATT}(Q,K,V)=\mathtt{Softmax}(QW_QW_K^\top K^\top /\sqrt{d})VW_V=\mathtt{Softmax}(QA^\top K^\top /\sqrt{d})VW_V$,
in which $A^\top =W_QW_K^\top $ and $W_Q,W_K,W_V\in \mathbb{R}^{d\times d}$ are trainable parameters that transform input matrices into another space. $Q\in \mathbb{R}^{d_Q\times d},K\in \mathbb{R}^{d_K\times d},V\in \mathbb{R}^{d_V\times d}$ stand for the query, key and value matrices,  $d_Q$, $d_K$, and $d_V$ are the corresponding dimensions of the matrices while $Z\in \mathbb{R}^{d_Q\times d}$ is the output of the attention layer. Here, the $\mathtt{Softmax}$ operation is applied to each row of the matrix concerned. 

As shown on the left of \Cref{fig:compare}, ED's encoder contains multiple Transformer encoder blocks denoted with the shaded rectangle. $G_l^E\in \mathbb{R}^{|s|\times d}$ is the input feature matrix for the self attention layer $\mathtt{ATT}_l^E$ while $H_l^E\in \mathbb{R}^{|s|\times d}$ is the output matrix of the $l$th encoder block. $G_l^E$ equals to the sum of the word embedding and the positional embedding for the first block ($l=1$) and $G_l^E$ equals to $H_{l-1}^E$ when $l>1$. In ED's decoder, we denote $G_l^D\in \mathbb{R}^{|t|\times d}$ as the output of the self attention layer $\mathtt{ATT}_l^D$. We feed $G_l^D$ into the encoder attention $\mathtt{ATT}_l^J$ and denote the output as $Q_l^D\in \mathbb{R}^{|t|\times d}$. $H_l^D\in \mathbb{R}^{|t|\times d}$ is the output matrix of the $l$th decoder block. We denote $\mathcal{L}^D$ as the negative log likelihood loss for the target sequence. Similarly, as shown on the right of \Cref{fig:compare}, the RED structure has the same definition of the matrices as that in ED. Moreover, We denote $\mathcal{L}^E$ as the negative log likelihood loss for the source sequence and we denote the unidirectional cross attention as $\mathtt{ATT}_l$. For more structural details we refer the readers to Appendix \ref{sec:structure} and the original papers \cite{vaswani2017attention,ott2019fairseq}.

\begin{figure*}[!t]
  \centering
  \includegraphics[width=1.0\columnwidth]{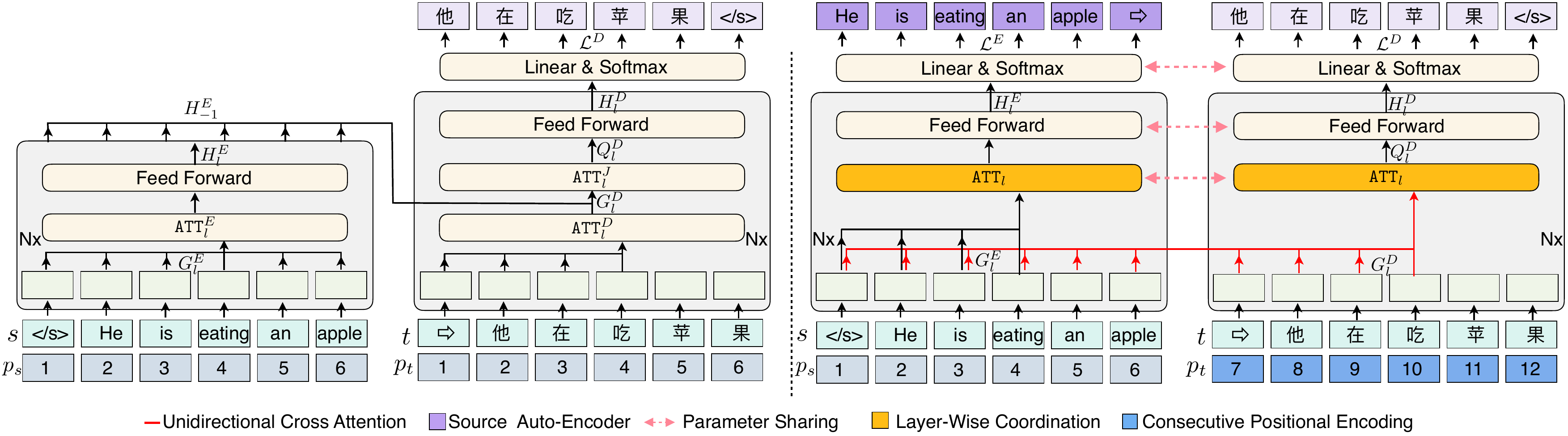}
  \caption{Encoder-Decoder framework (left) and Regularized Encoder-Decoder framework (right). }
  \label{fig:compare}
\end{figure*}

\subsection{Regularized Encoder-Decoder}\label{sec:red}

Though the decoder-only Language Model (LM) is simply a decoder, it is still difficult to be compared with an Encoder-Decoder (ED) structure because this decoder handles both the source sequence and the target sequence together. To facilitate the comparison between the ED and LM structure, we propose to analyze a Regularized Encoder-Decoder (RED) framework as illustrated in \Cref{fig:compare}. It is a variant of the traditional ED framework while replicating the behaviors of an LM. Compared with the traditional ED structure, the RED framework mainly has the following different components: 
An unidirectional cross attention attends to both the source matrix and the target matrix simultaneously; 
a source auto-encoder recovers the input source; 
a parameter sharing mechanism shares the parameters between the encoder and the decoder; 
a layer-wise coordination component makes each decoder layer attending to the corresponding encoder layer output;
a consecutive positional encoding utilizes a positional encoding starting from the length of the source tokens in the decoder.

\textbf{Unidirectional Cross Attention.}
The main difference between the ED framework and the LM is how the input source information is merged into the decoder. 
As illustrated in \Cref{fig:compare}, the ED framework first uses multiple Transformer blocks to extract features $H_{-1}^E$ from the source sequence $s$. Afterwards, it utilizes a self attention $\mathtt{ATT}_l^D$ to get the feature matrix $G_l^D$. It then uses an encoder attention $\mathtt{ATT}_l^J$ to take $G_l^D$ as query and uses the encoder's final output $H_{-1}^E$ as the key and value to calculate $Q_l^D$.
On the other hand, an LM uses an unidirectional attention to handle the concatenated features. To simulate this mechanism in the LM, as illustrated in \Cref{fig:compare}, the RED framework uses unidirectional cross attention $\mathtt{ATT}_l$ which attends to both the source matrix $G_l^E$ and the target matrix $G_l^D$ simultaneously. Since it attends to all features with one attention, the output matrix $Q_l^D$ of the attention layer becomes less sensitive to the input source matrix $G_l^E$ especially when it has already generated many words and $G_l^D$ becomes relatively long. We call this the attention degeneration problem and we will analyze it in detail in Section \ref{sec:attention-degeneration-problem}.

\textbf{Source Auto-Encoder.} The traditional ED structure only predicts the probability for the target sequence and just takes the source sequence as features. On the contrary, an LM predicts the probability for the whole concatenated sequence including the source sequence. Therefore, in the RED framework, we adopt a Source Auto-Encoder (SAE) component to realize this mechanism. As shown in \Cref{fig:compare}, the overall loss is composed of the decoder loss $\mathcal{L}^D$ and the SAE loss $\mathcal{L}^E$. $\mathcal{L}^D$ is the same as that in the ED framework while $\mathcal{L}^E$ is actually a regularizer that recovers the source sequence $s$ to itself. Therefore, it can alleviate the overfitting problem in training and thus improve the performance.

\textbf{Parameter Sharing.} In the traditional ED framework, the encoder and the decoder have their own parameters. On the contrary, in an LM, the source and target sequences are concatenated and passed through the same network with the same parameters.
To simulate this property, the RED framework shares the parameters between the encoder and the decoder. Parameter sharing techniques \cite{dehghani2018universal,xia2019tied,lan2019albert,conneau2019cross} can be recognized as another regularizer that prevents the model from having too many parameters and thus alleviates the overfitting problem.

\textbf{Layer-Wise Coordination.} The traditional ED structure feeds the source sequence into the encoder and get the final source hidden representation matrix $H_{-1}^E$. Then, each decoder layer takes $H_{-1}^E$ as the input matrix for the encoder attention layer $\mathtt{ATT}_l^J$. On the contrary, if the source and target sequence are concatenated and feed into the LM, the attention component in each layer will take the current layer's hidden representation instead of using the same representation matrix. To imitate this feature, the RED framework adapts the layer-wise coordination component which uses attention $\mathtt{ATT}_l$ in the decoder to attend to each corresponding encoder layer feature $G_l^E$. This method enables the decoder to access multiple levels of source information and thus improves the performance \cite{belinkov2017neural,he2018layer,peters2018deep}.

\textbf{Consecutive Positional Encoding.} In the traditional ED structure, the source sequence $s$ and the target sequence $t$ have their own positional encoding $p_s$ and $p_t$ starting both from 1 to the length of the source and target sequences. However, if the source and target sequences are concatenated and sent into the LM, the positional encoding for the target sequence will not start from 1. To realize this mechanism, the RED framework uses a Consecutive Positional Encoding (CPE) which encodes the position $p_t$ for the target sequence $t$ starting from $|s|+1$ instead of restarting from 1. It should be noted that this mechanism has been proved in many research works \cite{he2018layer,conneau2019cross} to be less effective compared with its counterpart called Separate Positional Encoding (SPE) which restarts the positional embedding for the target sentence. This is because restarting the positional embedding makes the model more aware of different languages. Moreover, we observe that the LM always has strong attention on the first element of the target sequence. This token may also be used by the model to differentiate the input source and output target. Using a fixed starting position for the target sequence makes it easier for the model to find it.

\subsection{Attention Degeneration Problem}\label{sec:attention-degeneration-problem}

Since the unidirectional cross attention imposes attention on both the source sequence and the target sequence simultaneously, less and less attention will be focused on the source sequence as the target sequence length grows. This is the attention degeneration problem.
To quantitatively understand why the ED structure does not have this problem and how the influence of the source sequence decreases in LM, we propose a theoretical analysis on the sensitivity of $\mathtt{ATT}_l$ and $\mathtt{ATT}_l^J$ in the RED framework and the ED framework where the source information is merged into the decoder. The sensitivity measures the influence of the source vectors to the attention layer's output vectors. To simplify the analysis, we only analyze one head of the attention. Since we focus on analyzing the effect of concatenating the target sequence, we assume $\mathtt{ATT}_l$ and $\mathtt{ATT}_l^J$ both use the unidirectional encoding and have the same parameters to make them easier to be compared with each other. 

We denote the input source matrix for the attention layer as $X=[x_1,x_2,\cdots,x_N]^\top \in \mathbb{R}^{N\times d}$, where $x_i\in \mathbb{R}^{d}$ and $d$ is the vector dimension, $N$ is the length of the source sequence. We denote the target matrix as $Y=[y_1,y_2,\cdots,y_i]^\top \in \mathbb{R}^{i \times d}$, where $i$ is the current step. We denote the encoder attention as $Z^{E}=\mathtt{ATT}(Y, X, X)$ where $X$, $Y$, $Z^E$, and $\mathtt{ATT}$ correspond to $H_{-1}^E$, $G_l^D$, $Q_l^D$, and $\mathtt{ATT}_l^J$ in the ED structure respectively (\Cref{fig:compare}). On the other hand, the unidirectional cross attention can be denoted as $Z^{C}=\mathtt{ATT}(Y, [X^\top ,Y^\top ]^\top , [X^\top ,Y^\top ]^\top )$, where $X$, $Y$, $Z$, and $\mathtt{ATT}$ correspond to $G_l^E$, $G_l^D$, $Q_l^D$ and $\mathtt{ATT}_l$ in the RED structure respectively (\Cref{fig:compare}).

We begin our analysis by defining a notion of sensitivity. Intuitively, given a function $y=f(x)$, the sensitivity of $x$ on $y$ can be described as how the output vector $y$ changes ($\Delta y$) when imposing a pertubation $\Delta x$ on the input vector $x$. However, imposing different $\Delta x$ leads to different $\Delta y$, we propose to study the upper bound on the ratio between the magnitude of $x$ and $y$ based on the following proposition:

\begin{proposition}
  \small
  \label{prop:pertube}
  Given a function $y=f(x)$ with a Jacobian matrix $J_f$, if we have a pertubation vector $\Delta x$ and $y+\Delta y=f(x + \Delta x)$, then\vspace{-1em}
  \begin{equation}\small \frac{\|\Delta y\|}{\|\Delta x\|} \le \|J_f\| + o(1) .
  \end{equation}
\end{proposition}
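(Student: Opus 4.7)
The plan is to reduce the statement to a first-order Taylor expansion of $f$ at the point $x$ and then apply sub-multiplicativity of the operator norm. First I would write
\begin{equation*}
f(x+\Delta x) = f(x) + J_f\,\Delta x + r(\Delta x),
\end{equation*}
where the remainder satisfies $\|r(\Delta x)\| = o(\|\Delta x\|)$ as $\|\Delta x\|\to 0$; this is just the definition of differentiability at $x$ (which is implicit in the hypothesis that $J_f$ exists at $x$). Subtracting $y=f(x)$ from both sides yields $\Delta y = J_f\,\Delta x + r(\Delta x)$.

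Next I would take norms of both sides and apply the triangle inequality together with the defining inequality $\|J_f\,\Delta x\|\le \|J_f\|\,\|\Delta x\|$ for the operator norm:
\begin{equation*}
\|\Delta y\| \le \|J_f\,\Delta x\| + \|r(\Delta x)\| \le \|J_f\|\,\|\Delta x\| + \|r(\Delta x)\|.
\end{equation*}
Dividing by $\|\Delta x\|$ (which we may assume is nonzero; the claim is vacuous otherwise) gives
\begin{equation*}
\frac{\|\Delta y\|}{\|\Delta x\|} \le \|J_f\| + \frac{\|r(\Delta x)\|}{\|\Delta x\|},
\end{equation*}
and the last term is $o(1)$ as $\|\Delta x\|\to 0$ by the remainder estimate above. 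This is exactly the bound claimed.

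The proof is essentially bookkeeping around the definition of the Jacobian, so I do not expect a real obstacle; the only point worth flagging is how the statement interprets $o(1)$. Since $\Delta x$ is the only free quantity that can be driven to zero, the cleanest reading is ``$o(1)$ as $\|\Delta x\|\to 0$,'' and that is exactly what the remainder term $\|r(\Delta x)\|/\|\Delta x\|$ delivers. If one instead wanted a clean non-asymptotic bound, one would need either a $C^{1}$ assumption on $f$ together with a modulus of continuity for $J_f$, or a global Lipschitz bound on $J_f$; but neither is required for the stated asymptotic inequality.
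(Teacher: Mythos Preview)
Your proof is correct and follows essentially the same route as the paper's own argument: first-order Taylor expansion at $x$, then triangle inequality plus sub-multiplicativity of the operator norm, then division by $\|\Delta x\|$ to turn the $o(\|\Delta x\|)$ remainder into an $o(1)$ term. Your added remark clarifying that $o(1)$ is meant as $\|\Delta x\|\to 0$ is a helpful gloss that the paper leaves implicit.
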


The proof is provided in Appendix \ref{proof:pertube}.  We can observe that the ratio is upper bounded with the norm of the Jacobian matrix $J_f$ plus $o(1)$.  Then, we focus on $J_f$ and define the sensitivity of the attention component as the norm of the Jacobian matrix of the $i$th row vector $z_i$ of the attention output $Z$ with respect to the $j$th row vector $x_j$ in $X$.

\begin{definition}
  [Sensitivity] Let the vectors $x_j$ and $z_i$ be defined above. The sensitivity of $z_i$ with respect to $x_j$ is defined as the norm of the Jacobian matrix:
  \begin{equation}\small S_{ij}=\|J_{ij}\|=\|\frac{\partial z_i}{\partial x_j}\|.  \end{equation}
\end{definition}

Intuitively, if the sensitivity is high, the output vector is closely related to the input source vector. Otherwise, the output vector may not contain enough information about the input source vector and the output is likely to be generated by a simple LM. From the Section \ref{sec:pre}, $Z=\mathtt{ATT}(Y,X,X)=\mathtt{Softmax}(YA^\top X^\top/\sqrt{d})XW_V=PXW_V$, in which $P\in \mathbb{R}^{d_Y \times d_X}$ and $d_X,d_Y$ are the first dimensions of $X$ and $Y$ respectively. Following \citet{kim2020lipschitz}, the Softmax matrix $P$ is a stochastic matrix, namely, its entries are non-negative and its rows sum to 1. For each element $p_{ij}$ in $P$, $p_{ij}\in[0,1]$ and they have an equal chance of receiving attention. Therefore, we have $\mathbb{E}(p_{ij})=\frac{1}{d_X}$. To compare the sensitivity between the two kinds of attention, with the above assumption, we have the following theorem:

\begin{theorem}
  \label{thm:jacobs}
  For $Z^E=\mathtt{ATT}(Y,X,X)$, where $\|X\|, \|Y\|, \|A\|, \|W_V\|$ are bounded, $\exists$ $C_3\ge 0,\delta \in(0,1)$, with probability at least $1-\delta^2$,
  \begin{equation}\small
    \|J_{ij}^E\|\le C_3 (\frac{1}{N}+\sqrt{\ln \frac{1}{\delta}}).\end{equation}\vspace{-1em}

For $Z^C=\mathtt{ATT}(Y,[X^\top ,Y^\top ],[X^\top ,Y^\top ])$, with probability at least $1-\delta^2$,
\begin{equation}\small
  \|J_{ij}^C\|\le  C_3 (\frac{1}{N+i}+\sqrt{\ln \frac{1}{\delta}}).  \end{equation}\vspace{-1em}
\end{theorem}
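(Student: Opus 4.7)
The plan is to derive an explicit closed form for $J_{ij}$ via the chain rule, isolate the attention weight $p_{ij}$ as a common factor so that $\|J_{ij}\|\lesssim p_{ij}$, and then apply a concentration argument to control $p_{ij}$ around its mean $1/d_X$, where the number of keys $d_X$ differs between the two cases ($d_X=N$ for $Z^{E}$ and $d_X=N+i$ for $Z^{C}$).

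First, I would compute the Jacobian explicitly. Writing $z_i=\sum_k p_{ik}\,x_k W_V$ with $p_{ik}=\mathtt{Softmax}_k(y_i A^\top x_k^\top/\sqrt{d})$, differentiation produces two contributions. The direct contribution through the value pathway gives $p_{ij}W_V^\top$. The indirect contribution uses the standard softmax identity $\partial p_{ik}/\partial x_j = p_{ik}(\delta_{jk}-p_{ij})\,y_i A^\top/\sqrt{d}$ and, after summing over $k$, collapses to the rank-one correction $\tfrac{p_{ij}}{\sqrt{d}}(x_j W_V-z_i)^\top(y_i A^\top)$. Both terms carry an explicit factor of $p_{ij}$, so the triangle and sub-multiplicative inequalities yield
\[
\|J_{ij}\|\;\le\;p_{ij}\Bigl(\|W_V\|+\tfrac{1}{\sqrt{d}}\|x_j W_V-z_i\|\cdot\|y_i A^\top\|\Bigr)\;\le\;C_1\,p_{ij},
\]
where $C_1$ depends only on the stated norm bounds on $X$, $Y$, $A$, $W_V$.

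Next, I would establish a probabilistic bound on $p_{ij}$. By the modelling assumption recalled in Section~\ref{sec:attention-degeneration-problem} that each key is equally likely to receive attention, $\mathbb{E}[p_{ij}]=1/d_X$. Because the logits $y_iA^\top x_k^\top/\sqrt{d}$ are uniformly bounded under the given norm assumptions, a sub-Gaussian concentration argument — analogous to the Lipschitz analysis of the softmax in \citet{kim2020lipschitz}, e.g.\ McDiarmid applied to the bounded log-softmax differences — delivers $p_{ij}\le 1/d_X+C_2\sqrt{\ln(1/\delta)}$ with probability at least $1-\delta^2$. Combining with the Jacobian bound and choosing $C_3$ so that it absorbs both $C_1$ and $C_1C_2$ yields the stated inequality $\|J_{ij}\|\le C_3\bigl(1/d_X+\sqrt{\ln(1/\delta)}\bigr)$.

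The two cases of the theorem then follow by specialising $d_X$: for $Z^{E}=\mathtt{ATT}(Y,X,X)$ the softmax is taken over the $N$ source rows of $X$, giving $d_X=N$; for $Z^{C}=\mathtt{ATT}(Y,[X^\top,Y^\top]^\top,[X^\top,Y^\top]^\top)$ the key/value matrix has $N+i$ rows, giving $d_X=N+i$. The main obstacle is the concentration step: while Hoeffding's inequality applies directly to the (bounded) logits, carrying the bound cleanly through the normalising denominator of the softmax — without losing the $1/d_X$ mean — is what forces the sub-Gaussian-flavoured Lipschitz argument and produces the $\sqrt{\ln(1/\delta)}$ scaling with failure probability $\delta^2$. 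The remaining pieces, namely the explicit Jacobian calculation and the matrix-norm bookkeeping, are routine chain-rule algebra and reduce to the computation sketched above.
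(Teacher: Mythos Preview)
Your proposal is correct and follows the same two-step strategy as the paper: first show $\|J_{ij}\|\le C\,p_{ij}$ from the explicit Jacobian, then control $p_{ij}$ probabilistically around its mean $1/d_X$, with $d_X=N$ versus $d_X=N+i$ distinguishing the two cases. Two execution differences are worth noting. Your rank-one form $p_{ij}W_V^\top + \tfrac{p_{ij}}{\sqrt d}(x_jW_V-z_i)^\top(y_iA^\top)$ is the paper's expression $W^\top\bigl(X^\top(\operatorname{Diag}(p_i)-p_ip_i^\top)e_{ji}YA^\top+Ip_{ij}\bigr)$ after collapsing the $k$-sum; the paper keeps the matrix form and bounds the single-column matrix $(\operatorname{Diag}(p_i)-p_ip_i^\top)e_{ji}$ directly to extract a factor $p_{ij}\sqrt{2}$. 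More importantly, the concentration step you flag as the ``main obstacle'' is simpler than your sketch suggests: the paper does not push a tail bound on the logits through the softmax denominator via McDiarmid, but simply applies Hoeffding's inequality to the single bounded variable $p_{ij}\in[0,1]$ under the assumption $\mathbb{E}[p_{ij}]=1/d_X$, obtaining $\Pr(p_{ij}\ge 1/d_X+t)\le e^{-2t^2}$ in one line. For $Z^C$, your remark that it is literally the $Z^E$ computation with $N+i$ keys is more direct than the paper's route, which recasts the cross attention as self-attention on $Q=[X;Y]$, invokes a separate self-attention Jacobian lemma, and then observes that the extra $\delta_{N+i,j}$ term vanishes since $j\le N$.
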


The proof is provided in Appendix \ref{proof:proof-thm-bound}. It can be directly observed that the upper bound of the Jacobian matrix norm is negatively correlated with the value sequence length. Therefore, $\|J_{ij}^{C}\|$ has a smaller upper bound than $\|J_{ij}^{E}\|$, which implies that the encoder attention ($\mathtt{ATT}_l^J$) output is more likely to be sensitive to the input source vectors. As a result, if the input vector changes, the output of the encoder attention changes more significantly than that of the cross unidirectional attention. Moreover, the difference between the two upper bounds becomes even larger as the generating step number $i$ grows. It shows that the sensitivity to the source input decreases in the unidirectional cross attention as the step number $i$ grows. We will conduct extensive experiments to verify these observations.

\section{Partial Attention Language Model}

\begin{figure}[!t]
  \centering
  \includegraphics[width=0.5\columnwidth]{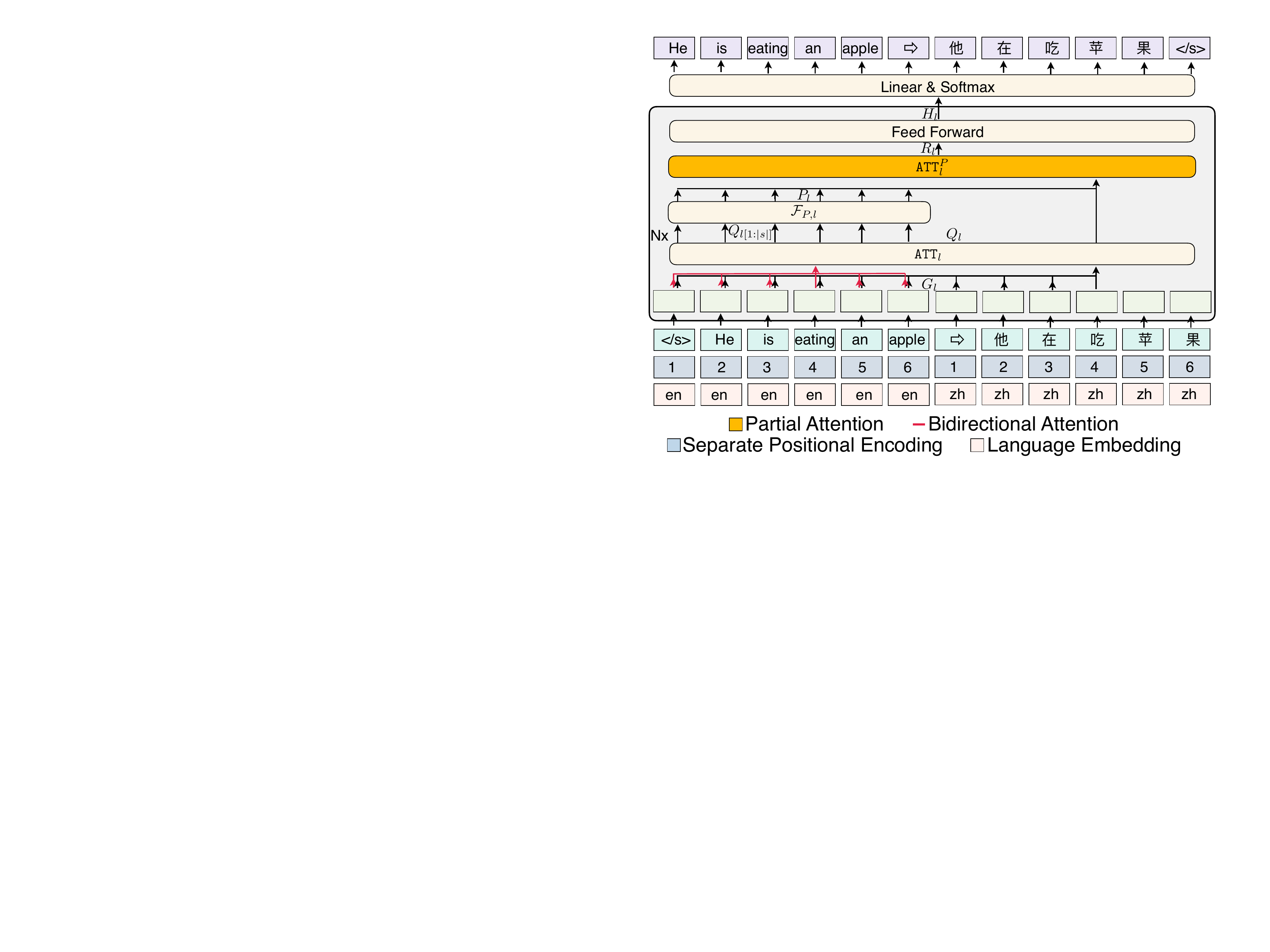}
  \caption{PALM framework. }
  \label{fig:palm}
  \vspace{-1em}
\end{figure}

To overcome the defects in the LM discussed above, we propose a Partial Attention Language Model (PALM) which is shown in \Cref{fig:palm}. We keep the effective components and remove the unfavorable components as analyzed in the RED framework. First, we propose a novel partial attention component to alleviate the attention degeneration problem. Besides, we designate a few adjustments according to the aforementioned analysis of defects in the LM. Specifically, we propose to use a separate positional embedding to replace the consecutive positional embedding and use bidirectional attention for the source sequence. Moreover, we add a language encoding layer and keep other components unchanged.

\textbf{Partial Attention Component.} To alleviate the attention degeneration problem, we propose the Partial Attention (PA) component. Similar to the ED framework, it adds a new attention layer $\mathtt{ATT}_l^P$ that only focuses on the source part of the feature matrix $Q_l$ which is denoted as $Q_{l[1:|s|]}\in \mathbb{R}^{|s|\times d}$ and ${[1:|s|]}$ stands for taking the first to the $|s|$th row vectors of $Q_l$. Since $Q_{l[1:|s|]}$ is not affected as the target sequence grows, the attention degeneration problem can thus be alleviated. As shown in \Cref{fig:palm}, $Q_{l[1:|s|]}$ is passed through a consecutive feedforward layer which is denoted as $P_l=\mathcal{F}_{P,l}(Q_{l[1:|s|]})$. 
Afterwards, we use another attention layer $\mathtt{ATT}_l^P$ to attend to each vector of $P_l$ by each vector in $G_l$ as
$R_l=\mathtt{ATT}_l^P(Q_l,P_l,P_l)$.
Then, $R_l$ is passed through a feedforward layer to get the hidden representation matrix $H_l$ for the next layer. Since $\mathtt{ATT}_l^P$ takes $P_l$ as the key and value matrix which has a fixed length and does not contain any target part, it can be concluded from Theorem \ref{thm:jacobs} that the sensitivity upper bound will not change as the generation step grows. Therefore, it can alleviate the attention degeneration problem.

\textbf{Bidirectional Attention.} In $\mathtt{ATT}_l$, we propose to set a bidirectional attention mask \cite{dong2019unified,raffel2020exploring} for $G_l[1:|s|]$ making each position in the source sequence aware of the whole source sequence.

\textbf{Separate Positional Encoding.} As discussed in Section \ref{sec:red}, we adopt the separate positional encoding in PALM to help the model differentiate between the source and target parts.

\textbf{Language Embedding.} We further adopt the language embedding \cite{conneau2019cross,conneau2020unsupervised} to help the model differentiate between the source and target sequences.

\section{Experiments}

\subsection{Experimental Settings}
We conduct experiments on several tasks to show the effectiveness of our proposed method. All the experiments are conducted with a single NVIDIA(R) TITAN RTX graphics card with Intel(R) Xeon(R) Silver 4210 CPU @ 2.20GHz CPU. All the datasets used in this paper are very commonly used in corresponding tasks. The detailed license and how the datasets are made can be found in the corresponding instructions. To the best of our knowledge, no personally identifiable information or offensive content have been reported in these datasets.

\textbf{Machine Translation} task \cite{cho2014learning,bahdanau2014neural,luong2015effective,wu2016google,gehring2017convolutional,johnson2017google,koehn2017six,vaswani2018tensor2tensor} is the most popular NLP task and we adopt several language pairs in the commonly used IWSLT’14 dataset \cite{cettolo2014report} (Creative Commons License) including De-En, En-De, It-En, En-It, En-Fr, Es-En, En-Es, Ru-En, En-Ru, He-En, En-He, Ro-En, En-Ro. Both the source and target sentences are encoded by the byte-pair encoding \cite{sennrich2016controlling} method with 10,000 subword units in a shared dictionary. In the ED model, we adopt the traditional Transformer model \cite{vaswani2017attention,ott2019fairseq} and keep all hyperparameters setting the same as the ``transformer\_iwslt\_de\_en'' architecture in fairseq \cite{ott2019fairseq}. To make a fair comparison between the models, we also keep all the transformer layer number to be 6 in all models. We use the BLEU score \cite{papineni2002bleu} as the evaluation metric.

\textbf{Data-to-Text} task can also be accomplished with Transformer. We investigate the performance on three datasets, namely the WebNLG \cite{perez2016building,gardent2017creating,gardent2017webnlg,ferreira2018enriching,shimorina2019creating} dataset (CC BY-NC-SA 4.0 License), E2E \cite{novikova2017e2e,duvsek2019semantic} dataset, and the WITA \cite{fu2020partially} dataset. The WebNLG is a human-annotated dataset aiming at generating text describing given knowledge base triples. The E2E is made with the Crowd Flower platform to generate restaurant comments based on their properties. The WITA dataset is an automatically generated dataset that generates Wikipedia article sentences based on input knowledge triples. We use the unannotated target sentences as the reference. We keep all the model hyperparameters the same as that in the machine translation task and evaluate the overall metrics with several evaluation metrics including BLEU \cite{papineni2002bleu}, ROUGE$_L$ \cite{lin2004rouge}, METEOR \cite{banerjee2005meteor}, NIST \cite{doddington2002automatic}, and CIDEr \cite{vedantam2015cider}.

\textbf{Summarization} task \cite{rush2015neural,chopra2016abstractive,nallapati2016abstractive,cheng2016neural,see2017get,paulus2018deep,pilault2020extractive} can also be implemented with a Transformer architecture. We conduct the experiments on XSUM \cite{narayan2018don} dataset (MIT License). We take the articles as the source sequence while using the summarization as the target sequence. We keep all the model hyperparameters the same as that in the machine translation task and report the ROUGE$_1$, ROUGE$_2$, ROUGE$_L$ metrics.

\subsection{Comparison Models}
We compare the PALM structure against the following models.

\textbf{ED} is the traditional Encoder-Decoder framework built on Transformer \cite{vaswani2017attention} implemented with fairseq~\cite{ott2019fairseq}.

\textbf{LM} is the traditional Language Model built with a Transformer decoder. It concatenates the source sequence and the target sequence to train an LM. When testing, it predicts the target sequence after inputting the source sequence into the LM.

\textbf{LM-SPE} adopts the Separate Positional Encoding \cite{conneau2019cross,he2018layer} in an LM as the positional encoding. The first position for the target sequence is always set to 1.

\textbf{LM-LE} uses a Language Embedding \cite{conneau2019cross,conneau2020unsupervised} to help LM differentiate source and target sequence.

\textbf{LM-PA} adopts our proposed Partial Attention component in an LM. Different from PALM, it uses unidirectional attention and does not use the LE and SPE components.

\textbf{PreLM} uses a Prefix Language Model \cite{liu2018generating,dong2019unified,raffel2020exploring} to generate target sequences. Different from LM, it uses a fully-visible masking of the input source instead of using a unidirectional causal mask. The mask enables each source word depending on the whole input sequence instead of the previous one.

\subsection{Experimental Results}

\begin{table*}[t]
    \centering
    \small
    
  \resizebox{1.\textwidth}{!}{
      \begin{tabular}{l@{\hskip 2mm}|c@{\hskip 2mm}c@{\hskip 2mm}c@{\hskip 2mm}c@{\hskip 2mm}c@{\hskip 2mm}c@{\hskip 2mm}c@{\hskip 2mm}c@{\hskip 2mm}c@{\hskip 2mm}c@{\hskip 2mm}c@{\hskip 2mm}c@{\hskip 2mm}c@{\hskip 2mm}|c@{\hskip 2mm}c}
      \toprule
      {} &  De-En &  En-De &  It-En &  En-It &  En-Fr &  Es-En &  En-Es &  Ru-En &  En-Ru &  He-En &  En-He &  Ro-En &  En-Ro & Avg. &   \#Paras \\
      \hline
      ED           &  34.18 &  28.00 &  31.96 &  29.42 &  40.86 &  40.99 &  37.53 &  23.09 &  18.20 &  38.09 &  25.41 &  38.14 &  28.30 &   31.86 &       47.1M \\
  LM           &  33.19 &  26.43 &  30.92 &  28.64 &  39.16 &  39.33 &  36.67 &  22.25 &  17.53 &  34.81 &  24.35 &  35.51 &  27.02 &   30.45 &       29.3M \\
  LM-SPE       &  33.35 &  27.35 &  31.36 &  28.87 &  39.93 &  39.69 &  36.99 &  21.89 &  17.98 &  34.89 &  24.80 &  35.19 &  27.72 &   30.77 &       29.3M \\
  LM-LE        &  33.58 &  27.46 &  31.38 &  29.03 &  40.14 &  39.87 &  37.05 &  22.24 &  18.08 &  34.80 &  24.31 &  35.76 &  27.96 &   30.90 &       29.3M \\
  LM-PA        &  34.54 &  28.35 &  32.01 &  29.70 &  40.15 &  40.58 &  37.24 &  22.98 &  18.45 &  35.64 &  25.37 &  37.35 &  28.02 &   31.57 &       35.6M \\
  PreLM        &  33.90 &  27.88 &  31.77 &  29.35 &  40.53 &  40.41 &  37.46 &  22.80 &  18.11 &  35.96 &  24.78 &  37.55 &  28.11 &   31.43 &       29.3M \\
  \hline
  PALM         &  34.73 &  28.59 &  32.22 &  29.83 &  40.31 &  40.91 &  37.61 &  23.27 &  18.74 &  37.00 &  25.29 &  37.59 &  28.75 &   31.91 &       35.6M \\
  PALM w/o SAE &  34.03 &  28.30 &  31.83 &  29.42 &  39.87 &  40.65 &  37.25 &  22.21 &  17.92 &  37.06 &  24.89 &  37.09 &  28.40 &   31.46 &       35.6M \\
      \bottomrule
      \end{tabular}
  }

  \caption{BLEU scores for models on IWSLT’14 dataset. Avg. is the average BLEU score for all language pairs while \#Paras is the count of model parameters.}
  \label{tab:iwslt}
  \vspace{-1em}
  \end{table*}

  \begin{figure*}[t]
    \centering
    \begin{minipage}[b]{0.35\textwidth}
    \centering
    \includegraphics[width=1\columnwidth]{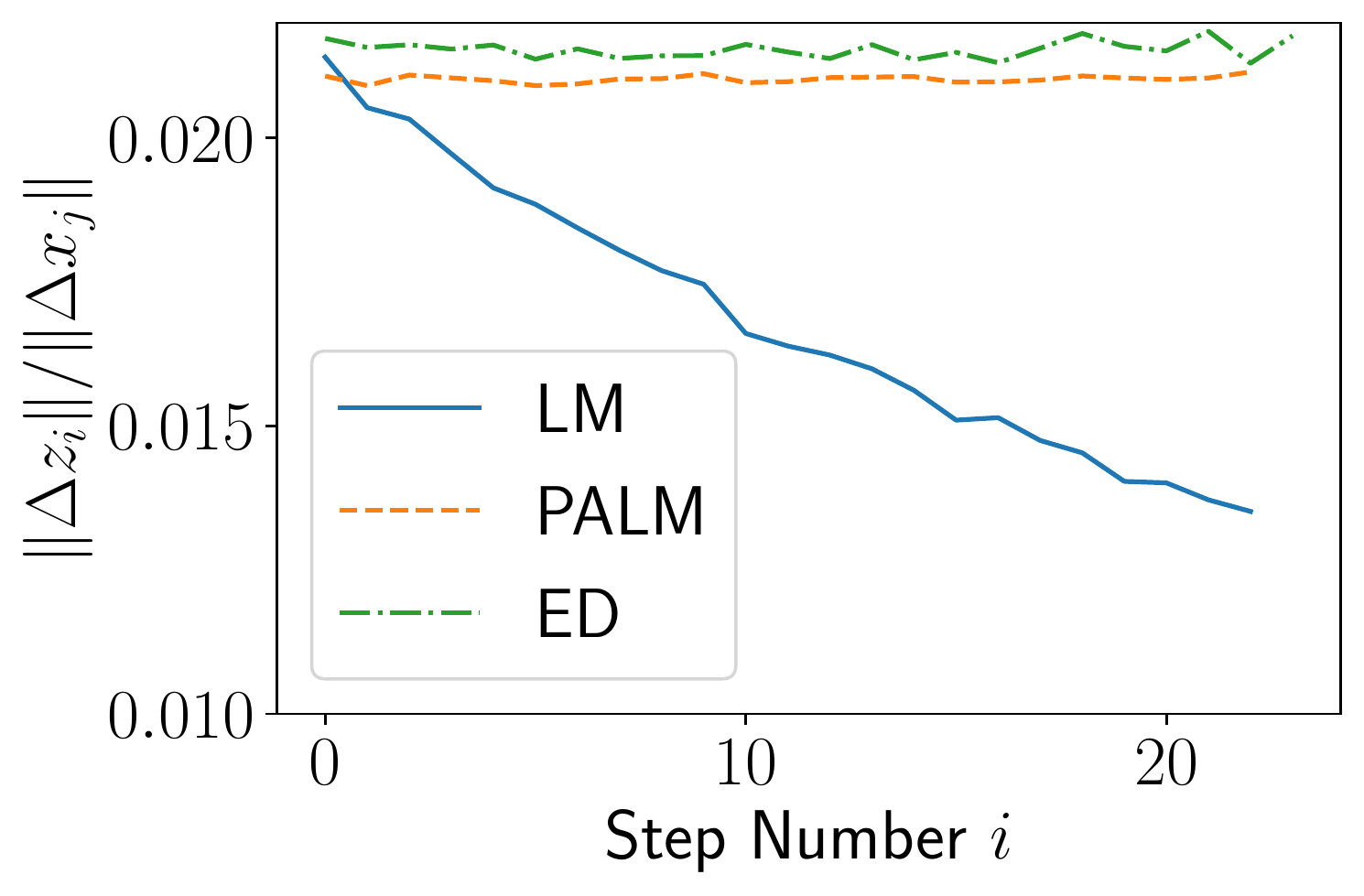}
    \caption{Numerical sensitivity analysis.\\\quad}
    \label{fig:jacobs}
    \end{minipage}
    \quad \quad \quad \quad \quad \quad 
    \begin{minipage}[b]{0.35\textwidth}
    \centering
    \includegraphics[width=1\columnwidth]{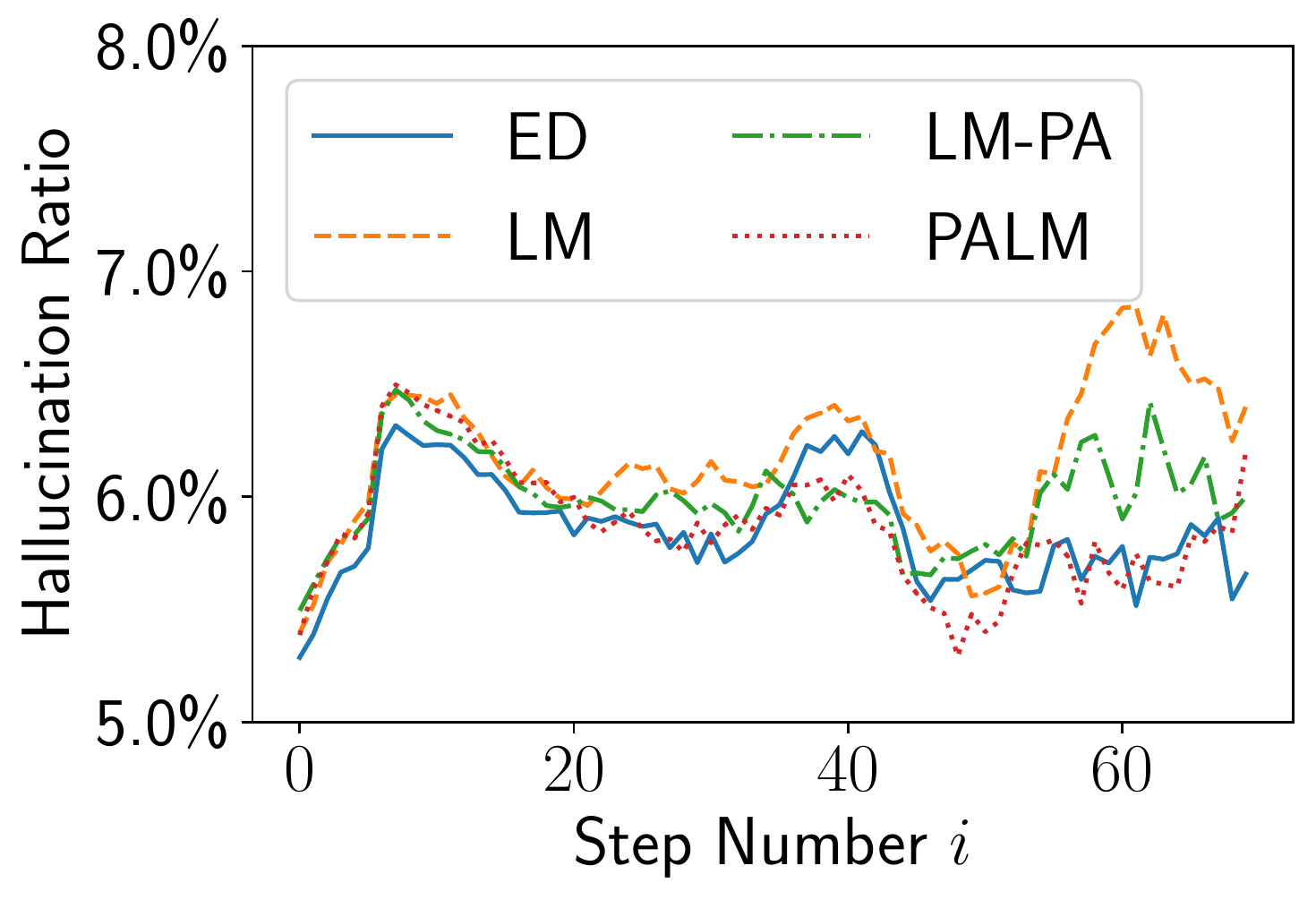}
    \caption{Stepwise hallucination analysis. Lower is better.}
    \label{fig:hallucination}
    \end{minipage}
  \end{figure*}

  \begin{table*}[t]
          \centering
          \small
          \begin{tabular}{@{~}l@{~}|@{~}c@{~}@{~}c@{~}@{~}c@{~}@{~}c@{~}|@{~}c@{~}}
            \toprule
            {} & Cs-En & En-Cs & Ro-En & En-Ro &  Avg. \\
            \hline
            ED           &  21.9 &  15.7 &  29.3 &  20.4 &  21.8 \\
            LM           &  16.2 &  11.0 &  25.1 &  15.9 &  17.1 \\
            PALM         &  21.1 &  15.1 &  28.3 &  19.3 &  21.0 \\
            PALM w/o SAE &  18.6 &  12.9 &  26.5 &  18.0 &  19.0 \\
            \bottomrule
            \end{tabular}
          
          \caption{BLEU score for WMT'16 dataset.}
          \label{tab:wmt}
  \end{table*}

\begin{table}[t]
    \centering

    \begin{minipage}[b]{0.45\textwidth}
      \small
      \centering
      \begin{tabular}{@{~}l@{~}|@{~}c@{~}@{~}c@{~}@{~}c@{~}@{~}c@{~}@{~}c@{~}@{~}c@{~}}
        \toprule
        {} &     LM &  LM-PA &   PALM &     ED &   Gold & $\Delta L$ \\
      \hline
      De-En &  21.00 &  21.37 &  21.30 &  21.08 &  22.25 &      0.30 \\
      En-De &  22.02 &  22.24 &  22.11 &  22.30 &  22.98 &      0.08 \\
      It-En &  22.35 &  22.40 &  22.37 &  22.84 &  23.88 &      0.02 \\
      En-It &  22.82 &  22.90 &  22.91 &  22.65 &  23.26 &      0.09 \\
      En-Fr &  23.70 &  24.29 &  24.23 &  24.00 &  24.01 &      0.53 \\
      Es-En &  21.16 &  21.46 &  21.35 &  21.88 &  22.48 &      0.19 \\
      En-Es &  21.66 &  21.97 &  21.96 &  22.06 &  22.03 &      0.30 \\
      Ru-En &  20.69 &  20.99 &  20.88 &  20.88 &  23.17 &      0.19 \\
      En-Ru &  22.21 &  22.71 &  22.72 &  22.87 &  22.96 &      0.50 \\
      He-En &  19.71 &  20.08 &  20.13 &  20.33 &  21.30 &      0.41 \\
      En-He &  19.61 &  20.16 &  20.03 &  20.31 &  20.66 &      0.42 \\
      Ro-En &  19.54 &  20.13 &  19.88 &  20.01 &  21.34 &      0.33 \\
      En-Ro &  22.38 &  22.72 &  22.61 &  22.82 &  23.03 &      0.24 \\
      \hline
      Avg.  &  21.45 &  21.80 &  21.73 &  21.85 &  22.57 &      0.28 \\
          \bottomrule
        \end{tabular}

      \caption{Early stop effect. $\Delta L$ is calculated by length of PALM minus length of LM. }
      \label{tab:sentlen}
    
    \end{minipage}
    \vspace{-0em}
  \end{table}

  \begin{table}[t]
    \centering
    \small
    \begin{tabular}{l|ccccc}
    \toprule
    {} & METEOR & ROUGE$_L$ & CIDEr &  NIST &   BLEU \\
    \hline
    ED         &  0.441 &   0.725 &  4.08 &  11.0 &  0.616 \\
    LM         &  0.452 &   0.732 &  4.24 &  11.2 &  0.629 \\
    LM w/o SAE &  0.438 &   0.716 &  4.10 &  10.9 &  0.607 \\
    PALM       &  0.457 &   0.736 &  4.27 &  11.3 &  0.632 \\
    \bottomrule
    \end{tabular}  
  \caption{Results on WebNLG dataset.}
  \label{tab:webnlg}
  \end{table}

  \begin{table}[t]
    \centering
    \small    
      \begin{tabular}{l|ccccc}
      \toprule
      {} & METEOR & ROUGE$_L$ & CIDEr &  NIST &   BLEU \\
      \hline
      ED         &  0.396 &   0.630 &  1.62 &  8.07 &  0.597 \\
      LM         &  0.423 &   0.661 &  2.00 &  7.93 &  0.604 \\
      LM w/o SAE &  0.405 &   0.646 &  1.92 &  7.93 &  0.602 \\
      PALM       &  0.449 &   0.688 &  2.25 &  8.46 &  0.657 \\
      \bottomrule
      \end{tabular} 
  \caption{Results on E2E dataset.}
  \label{tab:e2e}
  \end{table}
  
\textbf{Main Results.} The experimental results for IWSLT'14 are shown in \Cref{tab:iwslt}. We can draw the following conclusions. (1) Compared with the ED framework, the LM has worse performance due to the defects we discussed in Section \ref{sec:attention-degeneration-problem}. (2) Compared with the LM model, our proposed PALM structure achieved better performance which indicates that the attention degeneration is alleviated. (3) The PreLM improves the performance since the source text hidden representation depends on the whole source sequence instead of the previous context. (4) The LE component helps improve the performance as it provides extra language information for the model. (5) the LM-SPE model performs better than the traditional LM model using CPE. This observation verifies our analysis in Section \ref{sec:red}. (6) From the ablation study, the performance decreases without the SAE component. It shows that the SAE component can alleviate the overfitting problem and thus improves the performance. (7) Compared with the ED model, LM, and our proposed PALM reduce the parameter number significantly.

\begin{table}[t]
    \centering
    \begin{minipage}[b]{0.45\textwidth}
        \small
            \begin{tabular}{l|ccccc}
            \toprule
            {} & METEOR & ROUGE$_L$ & CIDEr &  NIST &   BLEU \\
            \hline
            ED  &  0.347 &   0.636 &  3.64 &  9.08 &  0.427 \\
            LM   &  0.352 &   0.648 &  3.77 &  7.83 &  0.382 \\
            PALM &  0.364 &   0.662 &  4.03 &   8.9 &  0.437 \\
            \bottomrule
            \end{tabular}
      \caption{Results on WITA dataset.}
      \label{tab:wita}
    \end{minipage}
    \quad \quad \quad
    \begin{minipage}[b]{0.35\textwidth}
        \small
  \centering
    \begin{tabular}{l@{~}|@{~}c@{~}@{~}c@{~}@{~}c}
      \toprule
      {} & ROUGE$_1$ & ROUGE$_2$ & ROUGE$_L$ \\
      \hline
      ED  &   0.284 &  0.0877 &    0.245 \\
      LM   &   0.279 &  0.0711 &   0.251 \\
      PALM &   0.317 &  0.0953 &   0.282 \\
      \bottomrule
      \end{tabular}
      \captionof{table}{Results on XSUM dataset.}
  \label{tab:xsum}
    \end{minipage}
  \end{table}

\textbf{Numerical Sensitivity Analysis.} In Theorem \ref{thm:jacobs}, we prove that the upper bound of the sensitivity in the unidirectional cross attention decreases as the current step number increases. To experimentally show that the output vector becomes less and less sensitive to the source sequence as the step number grows, we conduct a numerical analysis by directly imposing a random small perturbation vector $\Delta x_j$ on the source vector $x_j$ of the attention layer. We take the norm of the output vector deviation $\|\Delta z_i\|$ at the $i$th step divided by the input vector change norm $\|\Delta x_j\|$. This quantity is upper bounded by the sensitivity as shown in Proposition \ref{prop:pertube}. The results are shown in \Cref{fig:jacobs}. It can be concluded from the results that the sensitivity of the LM decreases as the step increases which leads to the model ignoring the input sequence gradually. It can also justify the conclusion drawn from Theorem \ref{thm:jacobs} that the ED model does not have this problem while our proposed PALM can also alleviate this problem.

\textbf{Stepwise Hallucination Analysis.}
As implied by Theorem \ref{thm:jacobs} and the above experiments, the sensitivity decreases as the step number increases in LM. But how does the sensitivity affect the generating performance? We conduct the stepwise hallucination analysis to answer this question. Generation models prone to generate unrelated content beyond the input data and this phenomenon has been recognized as the hallucination problem \cite{tian2019sticking,nie2019simple,liu2021token,rebuffel2021controlling} in many recent research works. Intuitively, the attention degeneration problem is likely to exacerbate the hallucination problem when the generation step grows. This is because less and less attention is focused on the input data and the models are more likely to generate text groundlessly. As shown in \Cref{fig:hallucination}, we conduct an experiment to analyze the hallucination problem at different generation steps. Similar to \cite{rebuffel2021controlling}, we use the Hallucination Ratio (HR) to measure the hallucination problem. 
HR measures the proportion of the hallucination content. The HR for the $i$th position is defined as $H_i=1-\sum_j^T \max(\mathbbm{1}(t_{ji} \in r_j), C[s_{j1},t_{ji}], \cdots, C[s_{j|s_j|},t_{ji}])/\sum_j^{T} \mathbbm{1}(|t_j|\ge i),$ where $T$ is the target sentence count. $s_{ji}$ and $t_{ji}$ are the $i$th word in the $j$th source and generated sentence respectively. $|t_j|$ is the length of the $j$th generated sentence. $r_j$ is the $j$th gold standard sentence. $C[p,q]=\mathtt{Sigmoid}((\sum_j^T \mathbbm{1}(p\in s_j)\cdot \mathbbm{1}(q\in t_j)-\alpha) / \beta)$ is the normalized alignment score of the $p$th and $q$th words in the dictionary, where $\alpha$ and $\beta$ are sigmoid shape parameters. High HR indicates the hallucination problem is prone to occur.
It can be observed that (1) HR changes moderately between 5\% and 7\% as $i$ grows in all models. (2) LM has the highest (worst) HR score especially when $i$ grows larger. When $i>50$, LM has a significant high HR score than other models which implies that the hallucination problem gets worse. (3) Simply adding the PA component on LM can alleviate the hallucination problem and lower the HR score. It indicates that the attention degeneration problem exacerbates the hallucination problem and it can be alleviated with our proposed PA component. (4) Our proposed PALM achieves a low HR score close to the ED model. It performs well even $i$ is large.

\textbf{Early Stop Effect.} To further study how the attention degeneration problem affects the performance, we conduct a statistical analysis on the output length of the generated sequence. The results are shown in \Cref{tab:sentlen}. It can be concluded from the results that (1) The LM usually generates shorter sentences compared with the ED model and PALM structure while simply adding the PA component in LM alleviates the early stop phenomenon. It shows that the attention degeneration problem causes the early stop phenomenon and it can be alleviated by our proposed PA component. Due to the attention degeneration problem, the sensitivity on the input source vectors decreases as the step number grows. The model becomes less and less sensitive to the source sequence and it tends to stop generating because it gets less and less information about what to say. 
(2) On the other hand, our proposed PALM structure alleviates this problem. The average length of the PALM structure output is longer than the LM model and close to the ED model showing that it alleviates the attention degeneration problem and thus overcomes the early stop effect.

\textbf{Data-to-Text Generation.} To show that our analysis and the proposed method are also applicable in other seq2seq tasks, we conduct experiments on the data-to-text generation tasks. The results on WebNLG, E2E, and WITA datasets are shown in \Cref{tab:webnlg}, \ref{tab:e2e}, and \ref{tab:wita} respectively. It can be observed that (1) our proposed PALM outperforms the LM model which shows the effectiveness of the partial attention component. (2) The LM model outperforms the ED model with the help of SAE component. Different from the translation task, the input sequence of this task shares the same language as the target sequence. Therefore, LM's SAE component trains the model to recover source sequences and can thus alleviate the overfitting problem. This can be directly verfied with the results of the LM w/o SAE model. When the SAE component is removed, the performance is worse than the LM model in all three datasets.

\textbf{WMT'16 Results.} We conduct our experiments on WMT'16 \cite{bojar2016findings} dataset which is larger than IWSLT'14. The performance is shown in \Cref{tab:wmt}. It can be concluded from the results that (1) our proposed PALM also outperforms LM's results and is close the ED's performance which shows PALM's effectiveness in alleviating the attention degeneration problem; (2) interestingly, removing the SAE component improves the performance and closes the gap to the performance of ED. This may be because the dataset is larger and the overfitting problem is not serious.

\textbf{Summarization.} We conduct experiments on the summarization dataset XSUM to further verify our analysis and the results are shown in \Cref{tab:xsum}. It can be concluded from the results that (1) the ED model outperforms the LM model in ROUGE$_1$ and ROUGE$_2$. This is because the input sequence is much longer than the target sequence, the attention degeneration problem becomes the dominant problem. This result is consistent with the analysis we discussed in Section \ref{sec:attention-degeneration-problem}; (2) our PALM structure still outperforms the LM model which is also consistent with the previous analysis; 

\textbf{Stepwise Precision Analysis.} 
  As implied by Theorem \ref{thm:jacobs} and the above experiments, the sensitivity decreases as the step number increases in LM. We further conduct the stepwise precision analysis to show that the sensitivity directly affect the generating performance. We calculate the average word precision for the $i$th position as $A_i=\sum_j^{S} \mathbbm{1}(s_{ji} \in r_j)/\sum_j^{S} \mathbbm{1}(|s_j|\ge i),$ where $s_j$ in the $j$th generated sentence and $s_{ji}$ is the $i$th word in sentence $s_j$. $r_j$ is the $j$th reference sentence and there are $S$ sentences in total. The results are shown in \Cref{fig:stepprecision}. We have three observations. (1) $A_i$ changes moderately between 0.64 and 0.67 as $i$ grows which is determined by the word distribution. (2) $A_i$ for the LM model is slightly better than the ED framework when the step number is small. This is because when the step number is small, the attention degeneration problem is not the dominant factor for the performance and the LE, SAE, and other components play as the key role to make the performance better. (3) Contrastingly, $A_i$ for the LM model becomes worse as the step size gets larger when compared with the ED model. This observation shows that as the step number increases, the sensitivity decreases, and thus the model is likely to generate wrong words. (4) Our proposed PALM structure alleviates the attention degeneration problem and the precision distribution is similar to the ED model.
  
  \begin{figure}[t]
    \centering
    \includegraphics[width=0.4\columnwidth]{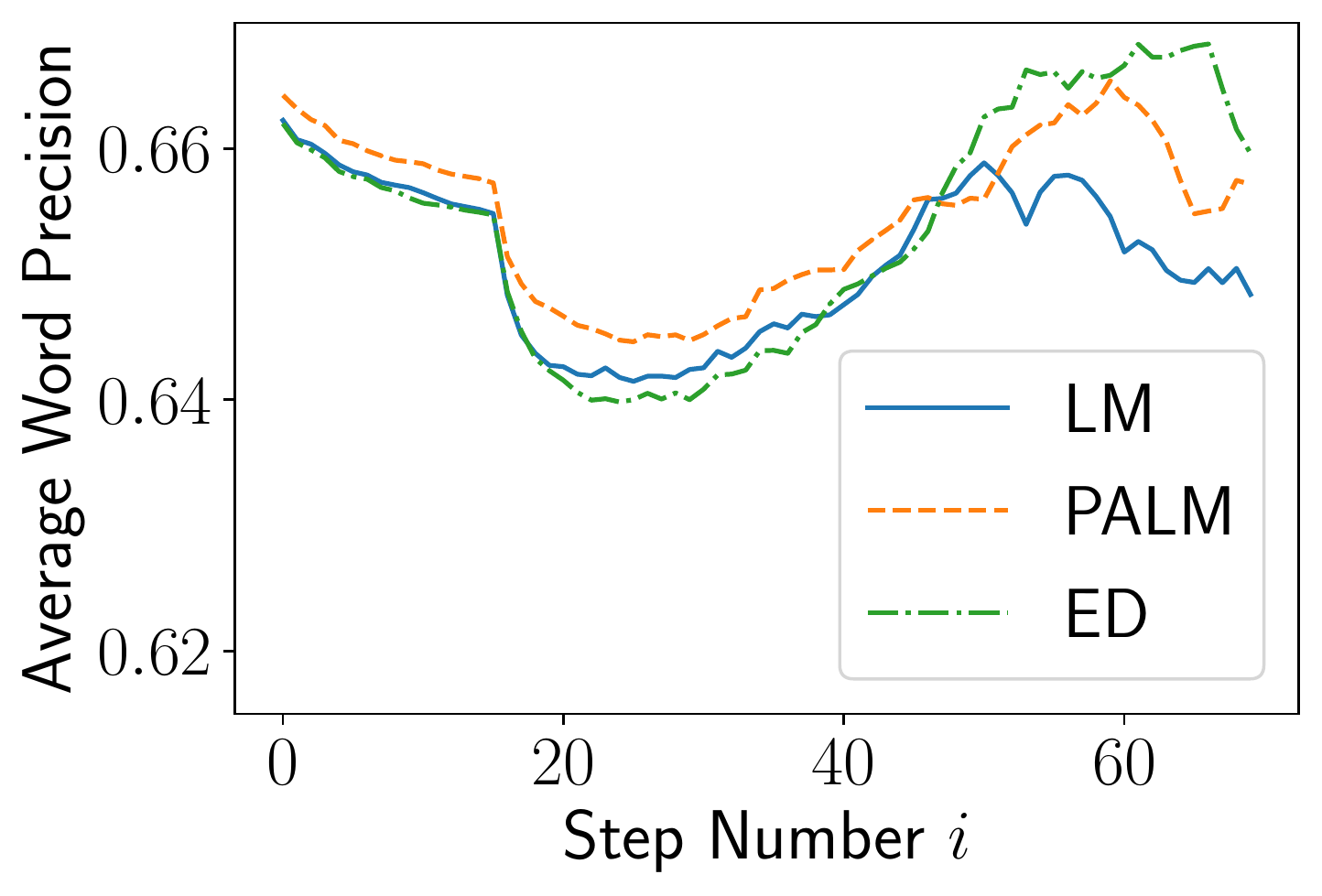}
    \caption{Stepwise precision analysis. $x$-axis is the generation step number and $y$-axis is the average word precision.}
    \label{fig:stepprecision}
  \end{figure}

\section{Conclusions}
We propose a detailed comparison between the Encoder-Decoder (ED) framework and the Language Model (LM) in the sequence-to-sequence (seq2seq) generation task. We propose to analyze a Regularized Encoder-Decoder (RED) framework that is equivalent to an LM but has an encoder and a decoder. We analyze the attention degeneration problem and conduct a theoretical analysis. Based on the analysis, we propose a novel partial attention language model. The experimental results validate our analysis and demonstrate the effectiveness of our proposed model.

\section{Limitations}
It should be noted that in Theorem \ref{thm:jacobs}, we utilize the PAC theory framework which holds with high probability. Unfortunately, it cannot guarantee all the cases satisfy the bound which means in some extreme cases, the bound will not be valid. This is a common short back for the PAC analysis and we use extensive experimental results to empirically show the effectiveness of the analysis. 

\section{Broader Impact Statement}
This paper focuses on comparing different architectures of the existing language model which does not explicitly involve any enthetic concerns. We do not make any new dataset and all used datasets are properly cited. This work does not cause any kind of safety or security concerns while it also does not raise any human rights concerns or environmental concerns. Therefore, there will be no negative societal impact on our work.

\bibliography{reference}

\begin{thebibliography}{72}
\expandafter\ifx\csname natexlab\endcsname\relax\def\natexlab#1{#1}\fi

\bibitem[{Bahdanau et~al.(2015)Bahdanau, Cho, and Bengio}]{bahdanau2014neural}
Dzmitry Bahdanau, Kyunghyun Cho, and Yoshua Bengio. 2015.
\newblock Neural machine translation by jointly learning to align and
  translate.
\newblock In \emph{3rd International Conference on Learning Representations,
  {ICLR} 2015, San Diego, CA, USA, May 7-9, 2015, Conference Track
  Proceedings}.

\bibitem[{Banerjee and Lavie(2005)}]{banerjee2005meteor}
Satanjeev Banerjee and Alon Lavie. 2005.
\newblock {METEOR}: An automatic metric for {MT} evaluation with improved
  correlation with human judgments.
\newblock In \emph{Proceedings of the {ACL} Workshop on Intrinsic and Extrinsic
  Evaluation Measures for Machine Translation and/or Summarization}, pages
  65--72, Ann Arbor, Michigan. Association for Computational Linguistics.

\bibitem[{Belinkov et~al.(2017)Belinkov, Durrani, Dalvi, Sajjad, and
  Glass}]{belinkov2017neural}
Yonatan Belinkov, Nadir Durrani, Fahim Dalvi, Hassan Sajjad, and James Glass.
  2017.
\newblock What do neural machine translation models learn about morphology?
\newblock In \emph{Proceedings of the 55th Annual Meeting of the Association
  for Computational Linguistics (Volume 1: Long Papers)}, pages 861--872,
  Vancouver, Canada. Association for Computational Linguistics.

\bibitem[{Bengio et~al.(2000)Bengio, Ducharme, and Vincent}]{bengio2003neural}
Yoshua Bengio, R{\'{e}}jean Ducharme, and Pascal Vincent. 2000.
\newblock A neural probabilistic language model.
\newblock In \emph{Advances in Neural Information Processing Systems 13, Papers
  from Neural Information Processing Systems {(NIPS)} 2000, Denver, CO, {USA}},
  pages 932--938. {MIT} Press.

\bibitem[{Bojar et~al.(2016)Bojar, Chatterjee, Federmann, Graham, Haddow, Huck,
  Jimeno~Yepes, Koehn, Logacheva, Monz, Negri, N{\'e}v{\'e}ol, Neves, Popel,
  Post, Rubino, Scarton, Specia, Turchi, Verspoor, and
  Zampieri}]{bojar2016findings}
Ond{\v{r}}ej Bojar, Rajen Chatterjee, Christian Federmann, Yvette Graham, Barry
  Haddow, Matthias Huck, Antonio Jimeno~Yepes, Philipp Koehn, Varvara
  Logacheva, Christof Monz, Matteo Negri, Aur{\'e}lie N{\'e}v{\'e}ol, Mariana
  Neves, Martin Popel, Matt Post, Raphael Rubino, Carolina Scarton, Lucia
  Specia, Marco Turchi, Karin Verspoor, and Marcos Zampieri. 2016.
\newblock Findings of the 2016 conference on machine translation.
\newblock In \emph{Proceedings of the First Conference on Machine Translation:
  Volume 2, Shared Task Papers}, pages 131--198, Berlin, Germany. Association
  for Computational Linguistics.

\bibitem[{Brown et~al.(2020)Brown, Mann, Ryder, Subbiah, Kaplan, Dhariwal,
  Neelakantan, Shyam, Sastry, Askell, Agarwal, Herbert{-}Voss, Krueger,
  Henighan, Child, Ramesh, Ziegler, Wu, Winter, Hesse, Chen, Sigler, Litwin,
  Gray, Chess, Clark, Berner, McCandlish, Radford, Sutskever, and
  Amodei}]{brown2020language}
Tom~B. Brown, Benjamin Mann, Nick Ryder, Melanie Subbiah, Jared Kaplan,
  Prafulla Dhariwal, Arvind Neelakantan, Pranav Shyam, Girish Sastry, Amanda
  Askell, Sandhini Agarwal, Ariel Herbert{-}Voss, Gretchen Krueger, Tom
  Henighan, Rewon Child, Aditya Ramesh, Daniel~M. Ziegler, Jeffrey Wu, Clemens
  Winter, Christopher Hesse, Mark Chen, Eric Sigler, Mateusz Litwin, Scott
  Gray, Benjamin Chess, Jack Clark, Christopher Berner, Sam McCandlish, Alec
  Radford, Ilya Sutskever, and Dario Amodei. 2020.
\newblock Language models are few-shot learners.
\newblock In \emph{Advances in Neural Information Processing Systems 33: Annual
  Conference on Neural Information Processing Systems 2020, NeurIPS 2020,
  December 6-12, 2020, virtual}.

\bibitem[{Castro~Ferreira et~al.(2018)Castro~Ferreira, Moussallem, Krahmer, and
  Wubben}]{ferreira2018enriching}
Thiago Castro~Ferreira, Diego Moussallem, Emiel Krahmer, and Sander Wubben.
  2018.
\newblock Enriching the {W}eb{NLG} corpus.
\newblock In \emph{Proceedings of the 11th International Conference on Natural
  Language Generation}, pages 171--176, Tilburg University, The Netherlands.
  Association for Computational Linguistics.

\bibitem[{Cettolo et~al.(2014)Cettolo, Niehues, St{\"u}ker, Bentivogli, and
  Federico}]{cettolo2014report}
Mauro Cettolo, J~Niehues, S~St{\"u}ker, Luisa Bentivogli, and Marcello
  Federico. 2014.
\newblock Report on the 11th iwslt evaluation campaign, iwslt 2014.
\newblock In \emph{IWSLT-International Workshop on Spoken Language Processing},
  pages 2--17. Marcello Federico, Sebastian St{\"u}ker, Fran{\c{c}}ois Yvon.

\bibitem[{Cheng and Lapata(2016)}]{cheng2016neural}
Jianpeng Cheng and Mirella Lapata. 2016.
\newblock Neural summarization by extracting sentences and words.
\newblock In \emph{Proceedings of the 54th Annual Meeting of the Association
  for Computational Linguistics (Volume 1: Long Papers)}, pages 484--494,
  Berlin, Germany. Association for Computational Linguistics.

\bibitem[{Cho et~al.(2014)Cho, van Merri{\"e}nboer, Gulcehre, Bahdanau,
  Bougares, Schwenk, and Bengio}]{cho2014learning}
Kyunghyun Cho, Bart van Merri{\"e}nboer, Caglar Gulcehre, Dzmitry Bahdanau,
  Fethi Bougares, Holger Schwenk, and Yoshua Bengio. 2014.
\newblock Learning phrase representations using {RNN} encoder{--}decoder for
  statistical machine translation.
\newblock In \emph{Proceedings of the 2014 Conference on Empirical Methods in
  Natural Language Processing ({EMNLP})}, pages 1724--1734, Doha, Qatar.
  Association for Computational Linguistics.

\bibitem[{Chopra et~al.(2016)Chopra, Auli, and Rush}]{chopra2016abstractive}
Sumit Chopra, Michael Auli, and Alexander~M. Rush. 2016.
\newblock Abstractive sentence summarization with attentive recurrent neural
  networks.
\newblock In \emph{Proceedings of the 2016 Conference of the North {A}merican
  Chapter of the Association for Computational Linguistics: Human Language
  Technologies}, pages 93--98, San Diego, California. Association for
  Computational Linguistics.

\bibitem[{Chowdhery et~al.(2022)Chowdhery, Narang, Devlin, Bosma, Mishra,
  Roberts, Barham, Chung, Sutton, Gehrmann et~al.}]{chowdhery2022palm}
Aakanksha Chowdhery, Sharan Narang, Jacob Devlin, Maarten Bosma, Gaurav Mishra,
  Adam Roberts, Paul Barham, Hyung~Won Chung, Charles Sutton, Sebastian
  Gehrmann, et~al. 2022.
\newblock Palm: Scaling language modeling with pathways.
\newblock \emph{arXiv preprint arXiv:2204.02311}.

\bibitem[{Conneau et~al.(2020)Conneau, Khandelwal, Goyal, Chaudhary, Wenzek,
  Guzm{\'a}n, Grave, Ott, Zettlemoyer, and Stoyanov}]{conneau2020unsupervised}
Alexis Conneau, Kartikay Khandelwal, Naman Goyal, Vishrav Chaudhary, Guillaume
  Wenzek, Francisco Guzm{\'a}n, Edouard Grave, Myle Ott, Luke Zettlemoyer, and
  Veselin Stoyanov. 2020.
\newblock Unsupervised cross-lingual representation learning at scale.
\newblock In \emph{Proceedings of the 58th Annual Meeting of the Association
  for Computational Linguistics}, pages 8440--8451, Online. Association for
  Computational Linguistics.

\bibitem[{Conneau and Lample(2019)}]{conneau2019cross}
Alexis Conneau and Guillaume Lample. 2019.
\newblock Cross-lingual language model pretraining.
\newblock In \emph{Advances in Neural Information Processing Systems 32: Annual
  Conference on Neural Information Processing Systems 2019, NeurIPS 2019,
  December 8-14, 2019, Vancouver, BC, Canada}, pages 7057--7067.

\bibitem[{Dehghani et~al.(2019)Dehghani, Gouws, Vinyals, Uszkoreit, and
  Kaiser}]{dehghani2018universal}
Mostafa Dehghani, Stephan Gouws, Oriol Vinyals, Jakob Uszkoreit, and Lukasz
  Kaiser. 2019.
\newblock Universal transformers.
\newblock In \emph{7th International Conference on Learning Representations,
  {ICLR} 2019, New Orleans, LA, USA, May 6-9, 2019}. OpenReview.net.

\bibitem[{Deng et~al.(2023)Deng, Sun, Zhang, Cheng, and Huang}]{deng2023recent}
Jiawen Deng, Hao Sun, Zhexin Zhang, Jiale Cheng, and Minlie Huang. 2023.
\newblock Recent advances towards safe, responsible, and moral dialogue
  systems: A survey.
\newblock \emph{arXiv preprint arXiv:2302.09270}.

\bibitem[{Doddington(2002)}]{doddington2002automatic}
George Doddington. 2002.
\newblock Automatic evaluation of machine translation quality using n-gram
  co-occurrence statistics.
\newblock In \emph{Proceedings of the Second International Conference on Human
  Language Technology Research}, pages 138--145.

\bibitem[{Dong et~al.(2019)Dong, Yang, Wang, Wei, Liu, Wang, Gao, Zhou, and
  Hon}]{dong2019unified}
Li~Dong, Nan Yang, Wenhui Wang, Furu Wei, Xiaodong Liu, Yu~Wang, Jianfeng Gao,
  Ming Zhou, and Hsiao{-}Wuen Hon. 2019.
\newblock Unified language model pre-training for natural language
  understanding and generation.
\newblock In \emph{Advances in Neural Information Processing Systems 32: Annual
  Conference on Neural Information Processing Systems 2019, December 8-14,
  2019, Vancouver, BC, Canada}, pages 13042--13054.

\bibitem[{Dong et~al.(2021)Dong, Cordonnier, and Loukas}]{dong2021attention}
Yihe Dong, Jean-Baptiste Cordonnier, and Andreas Loukas. 2021.
\newblock Attention is not all you need: Pure attention loses rank doubly
  exponentially with depth.
\newblock In \emph{International Conference on Machine Learning}, pages
  2793--2803. PMLR.

\bibitem[{Du{\v{s}}ek et~al.(2019)Du{\v{s}}ek, Howcroft, and
  Rieser}]{duvsek2019semantic}
Ond{\v{r}}ej Du{\v{s}}ek, David~M. Howcroft, and Verena Rieser. 2019.
\newblock Semantic noise matters for neural natural language generation.
\newblock In \emph{Proceedings of the 12th International Conference on Natural
  Language Generation}, pages 421--426, Tokyo, Japan. Association for
  Computational Linguistics.

\bibitem[{Fu et~al.(2021)Fu, Lam, So, and Shi}]{fu2021theoretical}
Zihao Fu, Wai Lam, Anthony Man-Cho So, and Bei Shi. 2021.
\newblock A theoretical analysis of the repetition problem in text generation.
\newblock In \emph{Proceedings of the AAAI Conference on Artificial
  Intelligence}, volume~35, pages 12848--12856.

\bibitem[{Fu et~al.(2020)Fu, Shi, Lam, Bing, and Liu}]{fu2020partially}
Zihao Fu, Bei Shi, Wai Lam, Lidong Bing, and Zhiyuan Liu. 2020.
\newblock Partially-aligned data-to-text generation with distant supervision.
\newblock In \emph{Proceedings of the 2020 Conference on Empirical Methods in
  Natural Language Processing (EMNLP)}, pages 9183--9193, Online. Association
  for Computational Linguistics.

\bibitem[{Gardent et~al.(2017{\natexlab{a}})Gardent, Shimorina, Narayan, and
  Perez-Beltrachini}]{gardent2017creating}
Claire Gardent, Anastasia Shimorina, Shashi Narayan, and Laura
  Perez-Beltrachini. 2017{\natexlab{a}}.
\newblock Creating training corpora for {NLG} micro-planners.
\newblock In \emph{Proceedings of the 55th Annual Meeting of the Association
  for Computational Linguistics (Volume 1: Long Papers)}, pages 179--188,
  Vancouver, Canada. Association for Computational Linguistics.

\bibitem[{Gardent et~al.(2017{\natexlab{b}})Gardent, Shimorina, Narayan, and
  Perez-Beltrachini}]{gardent2017webnlg}
Claire Gardent, Anastasia Shimorina, Shashi Narayan, and Laura
  Perez-Beltrachini. 2017{\natexlab{b}}.
\newblock The {W}eb{NLG} challenge: Generating text from {RDF} data.
\newblock In \emph{Proceedings of the 10th International Conference on Natural
  Language Generation}, pages 124--133, Santiago de Compostela, Spain.
  Association for Computational Linguistics.

\bibitem[{Gehring et~al.(2017)Gehring, Auli, Grangier, Yarats, and
  Dauphin}]{gehring2017convolutional}
Jonas Gehring, Michael Auli, David Grangier, Denis Yarats, and Yann~N. Dauphin.
  2017.
\newblock Convolutional sequence to sequence learning.
\newblock In \emph{Proceedings of the 34th International Conference on Machine
  Learning, {ICML} 2017, Sydney, NSW, Australia, 6-11 August 2017}, volume~70
  of \emph{Proceedings of Machine Learning Research}, pages 1243--1252. {PMLR}.

\bibitem[{Ghosh et~al.(2017)Ghosh, Chollet, Laksana, Morency, and
  Scherer}]{ghosh2017affect}
Sayan Ghosh, Mathieu Chollet, Eugene Laksana, Louis-Philippe Morency, and
  Stefan Scherer. 2017.
\newblock Affect-{LM}: A neural language model for customizable affective text
  generation.
\newblock In \emph{Proceedings of the 55th Annual Meeting of the Association
  for Computational Linguistics (Volume 1: Long Papers)}, pages 634--642,
  Vancouver, Canada. Association for Computational Linguistics.

\bibitem[{He et~al.(2018)He, Tan, Xia, He, Qin, Chen, and Liu}]{he2018layer}
Tianyu He, Xu~Tan, Yingce Xia, Di~He, Tao Qin, Zhibo Chen, and Tie{-}Yan Liu.
  2018.
\newblock Layer-wise coordination between encoder and decoder for neural
  machine translation.
\newblock In \emph{Advances in Neural Information Processing Systems 31: Annual
  Conference on Neural Information Processing Systems 2018, NeurIPS 2018,
  December 3-8, 2018, Montr{\'{e}}al, Canada}, pages 7955--7965.

\bibitem[{Johnson et~al.(2017)Johnson, Schuster, Le, Krikun, Wu, Chen, Thorat,
  Vi{\'e}gas, Wattenberg, Corrado, Hughes, and Dean}]{johnson2017google}
Melvin Johnson, Mike Schuster, Quoc~V. Le, Maxim Krikun, Yonghui Wu, Zhifeng
  Chen, Nikhil Thorat, Fernanda Vi{\'e}gas, Martin Wattenberg, Greg Corrado,
  Macduff Hughes, and Jeffrey Dean. 2017.
\newblock {G}oogle{'}s multilingual neural machine translation system: Enabling
  zero-shot translation.
\newblock \emph{Transactions of the Association for Computational Linguistics},
  5:339--351.

\bibitem[{Kim et~al.(2020)Kim, Papamakarios, and Mnih}]{kim2020lipschitz}
Hyunjik Kim, George Papamakarios, and Andriy Mnih. 2020.
\newblock The lipschitz constant of self-attention.
\newblock \emph{arXiv preprint arXiv:2006.04710}.

\bibitem[{Klein et~al.(2017)Klein, Kim, Deng, Senellart, and Rush}]{opennmt}
Guillaume Klein, Yoon Kim, Yuntian Deng, Jean Senellart, and Alexander Rush.
  2017.
\newblock {O}pen{NMT}: Open-source toolkit for neural machine translation.
\newblock In \emph{Proceedings of {ACL} 2017, System Demonstrations}, pages
  67--72, Vancouver, Canada. Association for Computational Linguistics.

\bibitem[{Koehn and Knowles(2017)}]{koehn2017six}
Philipp Koehn and Rebecca Knowles. 2017.
\newblock Six challenges for neural machine translation.
\newblock In \emph{Proceedings of the First Workshop on Neural Machine
  Translation}, pages 28--39, Vancouver. Association for Computational
  Linguistics.

\bibitem[{Lan et~al.(2020)Lan, Chen, Goodman, Gimpel, Sharma, and
  Soricut}]{lan2019albert}
Zhenzhong Lan, Mingda Chen, Sebastian Goodman, Kevin Gimpel, Piyush Sharma, and
  Radu Soricut. 2020.
\newblock {ALBERT:} {A} lite {BERT} for self-supervised learning of language
  representations.
\newblock In \emph{8th International Conference on Learning Representations,
  {ICLR} 2020, Addis Ababa, Ethiopia, April 26-30, 2020}. OpenReview.net.

\bibitem[{Lin(2004)}]{lin2004rouge}
Chin-Yew Lin. 2004.
\newblock {ROUGE}: A package for automatic evaluation of summaries.
\newblock In \emph{Text Summarization Branches Out}, pages 74--81, Barcelona,
  Spain. Association for Computational Linguistics.

\bibitem[{Liu et~al.(2018)Liu, Saleh, Pot, Goodrich, Sepassi, Kaiser, and
  Shazeer}]{liu2018generating}
Peter~J. Liu, Mohammad Saleh, Etienne Pot, Ben Goodrich, Ryan Sepassi, Lukasz
  Kaiser, and Noam Shazeer. 2018.
\newblock Generating wikipedia by summarizing long sequences.
\newblock In \emph{6th International Conference on Learning Representations,
  {ICLR} 2018, Vancouver, BC, Canada, April 30 - May 3, 2018, Conference Track
  Proceedings}. OpenReview.net.

\bibitem[{Liu et~al.(2021)Liu, Zhang, Brockett, Mao, Sui, Chen, and
  Dolan}]{liu2021token}
Tianyu Liu, Yizhe Zhang, Chris Brockett, Yi~Mao, Zhifang Sui, Weizhu Chen, and
  Bill Dolan. 2021.
\newblock A token-level reference-free hallucination detection benchmark for
  free-form text generation.
\newblock \emph{arXiv preprint arXiv:2104.08704}.

\bibitem[{Luong et~al.(2015)Luong, Pham, and Manning}]{luong2015effective}
Thang Luong, Hieu Pham, and Christopher~D. Manning. 2015.
\newblock Effective approaches to attention-based neural machine translation.
\newblock In \emph{Proceedings of the 2015 Conference on Empirical Methods in
  Natural Language Processing}, pages 1412--1421, Lisbon, Portugal. Association
  for Computational Linguistics.

\bibitem[{Mikolov et~al.(2010)Mikolov, Karafi{\'a}t, Burget,
  {\v{C}}ernock{\`y}, and Khudanpur}]{mikolov2010recurrent}
Tom{\'a}{\v{s}} Mikolov, Martin Karafi{\'a}t, Luk{\'a}{\v{s}} Burget, Jan
  {\v{C}}ernock{\`y}, and Sanjeev Khudanpur. 2010.
\newblock Recurrent neural network based language model.
\newblock In \emph{Eleventh annual conference of the international speech
  communication association}.

\bibitem[{Mikolov et~al.(2011)Mikolov, Kombrink, Burget, {\v{C}}ernock{\`y},
  and Khudanpur}]{mikolov2011extensions}
Tom{\'a}{\v{s}} Mikolov, Stefan Kombrink, Luk{\'a}{\v{s}} Burget, Jan
  {\v{C}}ernock{\`y}, and Sanjeev Khudanpur. 2011.
\newblock Extensions of recurrent neural network language model.
\newblock In \emph{2011 IEEE international conference on acoustics, speech and
  signal processing (ICASSP)}, pages 5528--5531. IEEE.

\bibitem[{Mikolov and Zweig(2012)}]{mikolov2012context}
Tomas Mikolov and Geoffrey Zweig. 2012.
\newblock Context dependent recurrent neural network language model.
\newblock In \emph{2012 IEEE Spoken Language Technology Workshop (SLT)}, pages
  234--239.

\bibitem[{Nallapati et~al.(2016)Nallapati, Zhou, dos Santos, Gul{\c{c}}ehre,
  and Xiang}]{nallapati2016abstractive}
Ramesh Nallapati, Bowen Zhou, Cicero dos Santos, {\c{C}}a{\u{g}}lar
  Gul{\c{c}}ehre, and Bing Xiang. 2016.
\newblock Abstractive text summarization using sequence-to-sequence {RNN}s and
  beyond.
\newblock In \emph{Proceedings of The 20th {SIGNLL} Conference on Computational
  Natural Language Learning}, pages 280--290, Berlin, Germany. Association for
  Computational Linguistics.

\bibitem[{Narayan et~al.(2018)Narayan, Cohen, and Lapata}]{narayan2018don}
Shashi Narayan, Shay~B. Cohen, and Mirella Lapata. 2018.
\newblock Don{'}t give me the details, just the summary! topic-aware
  convolutional neural networks for extreme summarization.
\newblock In \emph{Proceedings of the 2018 Conference on Empirical Methods in
  Natural Language Processing}, pages 1797--1807, Brussels, Belgium.
  Association for Computational Linguistics.

\bibitem[{Nie et~al.(2019)Nie, Yao, Wang, Pan, and Lin}]{nie2019simple}
Feng Nie, Jin-Ge Yao, Jinpeng Wang, Rong Pan, and Chin-Yew Lin. 2019.
\newblock A simple recipe towards reducing hallucination in neural surface
  realisation.
\newblock In \emph{Proceedings of the 57th Annual Meeting of the Association
  for Computational Linguistics}, pages 2673--2679, Florence, Italy.
  Association for Computational Linguistics.

\bibitem[{Novikova et~al.(2017)Novikova, Du{\v{s}}ek, and
  Rieser}]{novikova2017e2e}
Jekaterina Novikova, Ond{\v{r}}ej Du{\v{s}}ek, and Verena Rieser. 2017.
\newblock The {E}2{E} dataset: New challenges for end-to-end generation.
\newblock In \emph{Proceedings of the 18th Annual {SIG}dial Meeting on
  Discourse and Dialogue}, pages 201--206, Saarbr{\"u}cken, Germany.
  Association for Computational Linguistics.

\bibitem[{Ott et~al.(2019)Ott, Edunov, Baevski, Fan, Gross, Ng, Grangier, and
  Auli}]{ott2019fairseq}
Myle Ott, Sergey Edunov, Alexei Baevski, Angela Fan, Sam Gross, Nathan Ng,
  David Grangier, and Michael Auli. 2019.
\newblock fairseq: A fast, extensible toolkit for sequence modeling.
\newblock In \emph{Proceedings of the 2019 Conference of the North {A}merican
  Chapter of the Association for Computational Linguistics (Demonstrations)},
  pages 48--53, Minneapolis, Minnesota. Association for Computational
  Linguistics.

\bibitem[{Ouyang et~al.(2022)Ouyang, Wu, Jiang, Almeida, Wainwright, Mishkin,
  Zhang, Agarwal, Slama, Ray et~al.}]{ouyang2022training}
Long Ouyang, Jeffrey Wu, Xu~Jiang, Diogo Almeida, Carroll Wainwright, Pamela
  Mishkin, Chong Zhang, Sandhini Agarwal, Katarina Slama, Alex Ray, et~al.
  2022.
\newblock Training language models to follow instructions with human feedback.
\newblock \emph{Advances in Neural Information Processing Systems},
  35:27730--27744.

\bibitem[{Papineni et~al.(2002)Papineni, Roukos, Ward, and
  Zhu}]{papineni2002bleu}
Kishore Papineni, Salim Roukos, Todd Ward, and Wei-Jing Zhu. 2002.
\newblock {B}leu: a method for automatic evaluation of machine translation.
\newblock In \emph{Proceedings of the 40th Annual Meeting of the Association
  for Computational Linguistics}, pages 311--318, Philadelphia, Pennsylvania,
  USA. Association for Computational Linguistics.

\bibitem[{Paulus et~al.(2018)Paulus, Xiong, and Socher}]{paulus2018deep}
Romain Paulus, Caiming Xiong, and Richard Socher. 2018.
\newblock A deep reinforced model for abstractive summarization.
\newblock In \emph{6th International Conference on Learning Representations,
  {ICLR} 2018, Vancouver, BC, Canada, April 30 - May 3, 2018, Conference Track
  Proceedings}. OpenReview.net.

\bibitem[{Perez-Beltrachini et~al.(2016)Perez-Beltrachini, Sayed, and
  Gardent}]{perez2016building}
Laura Perez-Beltrachini, Rania Sayed, and Claire Gardent. 2016.
\newblock Building {RDF} content for data-to-text generation.
\newblock In \emph{Proceedings of {COLING} 2016, the 26th International
  Conference on Computational Linguistics: Technical Papers}, pages 1493--1502,
  Osaka, Japan. The COLING 2016 Organizing Committee.

\bibitem[{Peters et~al.(2018)Peters, Neumann, Iyyer, Gardner, Clark, Lee, and
  Zettlemoyer}]{peters2018deep}
Matthew Peters, Mark Neumann, Mohit Iyyer, Matt Gardner, Christopher Clark,
  Kenton Lee, and Luke Zettlemoyer. 2018.
\newblock Deep contextualized word representations.
\newblock In \emph{Proceedings of the 2018 Conference of the North {A}merican
  Chapter of the Association for Computational Linguistics: Human Language
  Technologies, Volume 1 (Long Papers)}, pages 2227--2237, New Orleans,
  Louisiana. Association for Computational Linguistics.

\bibitem[{Pilault et~al.(2020)Pilault, Li, Subramanian, and
  Pal}]{pilault2020extractive}
Jonathan Pilault, Raymond Li, Sandeep Subramanian, and Christopher Pal. 2020.
\newblock On extractive and abstractive neural document summarization with
  transformer language models.
\newblock In \emph{Proceedings of the 2020 Conference on Empirical Methods in
  Natural Language Processing (EMNLP)}, pages 9308--9319.

\bibitem[{Radford et~al.(2018)Radford, Narasimhan, Salimans, Sutskever
  et~al.}]{radford2018improving}
Alec Radford, Karthik Narasimhan, Tim Salimans, Ilya Sutskever, et~al. 2018.
\newblock Improving language understanding by generative pre-training.

\bibitem[{Radford et~al.(2019)Radford, Wu, Child, Luan, Amodei, and
  Sutskever}]{radford2019language}
Alec Radford, Jeffrey Wu, Rewon Child, David Luan, Dario Amodei, and Ilya
  Sutskever. 2019.
\newblock Language models are unsupervised multitask learners.

\bibitem[{Raffel et~al.(2020)Raffel, Shazeer, Roberts, Lee, Narang, Matena,
  Zhou, Li, and Liu}]{raffel2020exploring}
Colin Raffel, Noam Shazeer, Adam Roberts, Katherine Lee, Sharan Narang, Michael
  Matena, Yanqi Zhou, Wei Li, and Peter~J Liu. 2020.
\newblock Exploring the limits of transfer learning with a unified text-to-text
  transformer.
\newblock \emph{Journal of Machine Learning Research}, 21(140):1--67.

\bibitem[{Rebuffel et~al.(2021)Rebuffel, Roberti, Soulier, Scoutheeten,
  Cancelliere, and Gallinari}]{rebuffel2021controlling}
Cl{\'e}ment Rebuffel, Marco Roberti, Laure Soulier, Geoffrey Scoutheeten,
  Rossella Cancelliere, and Patrick Gallinari. 2021.
\newblock Controlling hallucinations at word level in data-to-text generation.
\newblock \emph{arXiv preprint arXiv:2102.02810}.

\bibitem[{Rush et~al.(2015)Rush, Chopra, and Weston}]{rush2015neural}
Alexander~M. Rush, Sumit Chopra, and Jason Weston. 2015.
\newblock A neural attention model for abstractive sentence summarization.
\newblock In \emph{Proceedings of the 2015 Conference on Empirical Methods in
  Natural Language Processing}, pages 379--389, Lisbon, Portugal. Association
  for Computational Linguistics.

\bibitem[{Scao et~al.(2022)Scao, Fan, Akiki, Pavlick, Ili{\'c}, Hesslow,
  Castagn{\'e}, Luccioni, Yvon, Gall{\'e} et~al.}]{scao2022bloom}
Teven~Le Scao, Angela Fan, Christopher Akiki, Ellie Pavlick, Suzana Ili{\'c},
  Daniel Hesslow, Roman Castagn{\'e}, Alexandra~Sasha Luccioni, Fran{\c{c}}ois
  Yvon, Matthias Gall{\'e}, et~al. 2022.
\newblock Bloom: A 176b-parameter open-access multilingual language model.
\newblock \emph{arXiv preprint arXiv:2211.05100}.

\bibitem[{See et~al.(2017)See, Liu, and Manning}]{see2017get}
Abigail See, Peter~J. Liu, and Christopher~D. Manning. 2017.
\newblock Get to the point: Summarization with pointer-generator networks.
\newblock In \emph{Proceedings of the 55th Annual Meeting of the Association
  for Computational Linguistics (Volume 1: Long Papers)}, pages 1073--1083,
  Vancouver, Canada. Association for Computational Linguistics.

\bibitem[{Sennrich et~al.(2016)Sennrich, Haddow, and
  Birch}]{sennrich2016controlling}
Rico Sennrich, Barry Haddow, and Alexandra Birch. 2016.
\newblock Controlling politeness in neural machine translation via side
  constraints.
\newblock In \emph{Proceedings of the 2016 Conference of the North {A}merican
  Chapter of the Association for Computational Linguistics: Human Language
  Technologies}, pages 35--40, San Diego, California. Association for
  Computational Linguistics.

\bibitem[{Shimorina et~al.(2019)Shimorina, Khasanova, and
  Gardent}]{shimorina2019creating}
Anastasia Shimorina, Elena Khasanova, and Claire Gardent. 2019.
\newblock Creating a corpus for {R}ussian data-to-text generation using neural
  machine translation and post-editing.
\newblock In \emph{Proceedings of the 7th Workshop on Balto-Slavic Natural
  Language Processing}, pages 44--49, Florence, Italy. Association for
  Computational Linguistics.

\bibitem[{Sutskever et~al.(2011)Sutskever, Martens, and
  Hinton}]{sutskever2011generating}
Ilya Sutskever, James Martens, and Geoffrey~E. Hinton. 2011.
\newblock Generating text with recurrent neural networks.
\newblock In \emph{Proceedings of the 28th International Conference on Machine
  Learning, {ICML} 2011, Bellevue, Washington, USA, June 28 - July 2, 2011},
  pages 1017--1024. Omnipress.

\bibitem[{Sutskever et~al.(2014)Sutskever, Vinyals, and
  Le}]{sutskever2014sequence}
Ilya Sutskever, Oriol Vinyals, and Quoc~V. Le. 2014.
\newblock Sequence to sequence learning with neural networks.
\newblock In \emph{Advances in Neural Information Processing Systems 27: Annual
  Conference on Neural Information Processing Systems 2014, December 8-13 2014,
  Montreal, Quebec, Canada}, pages 3104--3112.

\bibitem[{Taylor et~al.(2022)Taylor, Kardas, Cucurull, Scialom, Hartshorn,
  Saravia, Poulton, Kerkez, and Stojnic}]{taylor2022galactica}
Ross Taylor, Marcin Kardas, Guillem Cucurull, Thomas Scialom, Anthony
  Hartshorn, Elvis Saravia, Andrew Poulton, Viktor Kerkez, and Robert Stojnic.
  2022.
\newblock Galactica: A large language model for science.
\newblock \emph{arXiv preprint arXiv:2211.09085}.

\bibitem[{Tian et~al.(2019)Tian, Narayan, Sellam, and
  Parikh}]{tian2019sticking}
Ran Tian, Shashi Narayan, Thibault Sellam, and Ankur~P Parikh. 2019.
\newblock Sticking to the facts: Confident decoding for faithful data-to-text
  generation.
\newblock \emph{arXiv preprint arXiv:1910.08684}.

\bibitem[{Touvron et~al.(2023)Touvron, Lavril, Izacard, Martinet, Lachaux,
  Lacroix, Rozi{\`e}re, Goyal, Hambro, Azhar et~al.}]{touvron2023llama}
Hugo Touvron, Thibaut Lavril, Gautier Izacard, Xavier Martinet, Marie-Anne
  Lachaux, Timoth{\'e}e Lacroix, Baptiste Rozi{\`e}re, Naman Goyal, Eric
  Hambro, Faisal Azhar, et~al. 2023.
\newblock Llama: Open and efficient foundation language models.
\newblock \emph{arXiv preprint arXiv:2302.13971}.

\bibitem[{Vaswani et~al.(2018)Vaswani, Bengio, Brevdo, Chollet, Gomez, Gouws,
  Jones, Kaiser, Kalchbrenner, Parmar, Sepassi, Shazeer, and
  Uszkoreit}]{vaswani2018tensor2tensor}
Ashish Vaswani, Samy Bengio, Eugene Brevdo, Francois Chollet, Aidan Gomez,
  Stephan Gouws, Llion Jones, {\L}ukasz Kaiser, Nal Kalchbrenner, Niki Parmar,
  Ryan Sepassi, Noam Shazeer, and Jakob Uszkoreit. 2018.
\newblock {T}ensor2{T}ensor for neural machine translation.
\newblock In \emph{Proceedings of the 13th Conference of the Association for
  Machine Translation in the {A}mericas (Volume 1: Research Track)}, pages
  193--199, Boston, MA. Association for Machine Translation in the Americas.

\bibitem[{Vaswani et~al.(2017)Vaswani, Shazeer, Parmar, Uszkoreit, Jones,
  Gomez, Kaiser, and Polosukhin}]{vaswani2017attention}
Ashish Vaswani, Noam Shazeer, Niki Parmar, Jakob Uszkoreit, Llion Jones,
  Aidan~N. Gomez, Lukasz Kaiser, and Illia Polosukhin. 2017.
\newblock Attention is all you need.
\newblock In \emph{Advances in Neural Information Processing Systems 30: Annual
  Conference on Neural Information Processing Systems 2017, December 4-9, 2017,
  Long Beach, CA, {USA}}, pages 5998--6008.

\bibitem[{Vedantam et~al.(2015)Vedantam, Zitnick, and
  Parikh}]{vedantam2015cider}
Ramakrishna Vedantam, C.~Lawrence Zitnick, and Devi Parikh. 2015.
\newblock Cider: Consensus-based image description evaluation.
\newblock In \emph{{IEEE} Conference on Computer Vision and Pattern
  Recognition, {CVPR} 2015, Boston, MA, USA, June 7-12, 2015}, pages
  4566--4575. {IEEE} Computer Society.

\bibitem[{Vig and Belinkov(2019)}]{vig2019analyzing}
Jesse Vig and Yonatan Belinkov. 2019.
\newblock Analyzing the structure of attention in a transformer language model.
\newblock In \emph{Proceedings of the 2019 ACL Workshop BlackboxNLP: Analyzing
  and Interpreting Neural Networks for NLP}, pages 63--76, Florence, Italy.
  Association for Computational Linguistics.

\bibitem[{Wu et~al.(2016)Wu, Schuster, Chen, Le, Norouzi, Macherey, Krikun,
  Cao, Gao, Macherey et~al.}]{wu2016google}
Yonghui Wu, Mike Schuster, Zhifeng Chen, Quoc~V Le, Mohammad Norouzi, Wolfgang
  Macherey, Maxim Krikun, Yuan Cao, Qin Gao, Klaus Macherey, et~al. 2016.
\newblock Google's neural machine translation system: Bridging the gap between
  human and machine translation.
\newblock \emph{arXiv preprint arXiv:1609.08144}.

\bibitem[{Xia et~al.(2019)Xia, He, Tan, Tian, He, and Qin}]{xia2019tied}
Yingce Xia, Tianyu He, Xu~Tan, Fei Tian, Di~He, and Tao Qin. 2019.
\newblock Tied transformers: Neural machine translation with shared encoder and
  decoder.
\newblock In \emph{Proceedings of the AAAI Conference on Artificial
  Intelligence}, volume~33, pages 5466--5473.

\bibitem[{Zhang et~al.(2022)Zhang, Roller, Goyal, Artetxe, Chen, Chen, Dewan,
  Diab, Li, Lin et~al.}]{zhang2022opt}
Susan Zhang, Stephen Roller, Naman Goyal, Mikel Artetxe, Moya Chen, Shuohui
  Chen, Christopher Dewan, Mona Diab, Xian Li, Xi~Victoria Lin, et~al. 2022.
\newblock Opt: Open pre-trained transformer language models.
\newblock \emph{arXiv preprint arXiv:2205.01068}.

\bibitem[{Zhu et~al.(2020)Zhu, Xia, Wu, He, Qin, Zhou, Li, and
  Liu}]{zhu2020incorporating}
Jinhua Zhu, Yingce Xia, Lijun Wu, Di~He, Tao Qin, Wengang Zhou, Houqiang Li,
  and Tie{-}Yan Liu. 2020.
\newblock Incorporating {BERT} into neural machine translation.
\newblock In \emph{8th International Conference on Learning Representations,
  {ICLR} 2020, Addis Ababa, Ethiopia, April 26-30, 2020}. OpenReview.net.

\end{thebibliography}
\bibliographystyle{acl_natbib}


\clearpage
\appendix
\renewcommand{\thesection}{A.\arabic{section}}
\setcounter{theorem}{0}
\setcounter{lemma}{0}
\setcounter{corollary}{0}
\setcounter{proposition}{0}

\begin{center}
    {\LARGE \textbf{Appendix. Supplementary Material}}
    \end{center}

  \section{Proof of Proposition \ref{prop:pertube}}\label{proof:pertube}
  \begin{proposition}{ \ref{prop:pertube}}
    Given a function $y=f(x)$ with a Jacobian matrix $J_f$, if we have a pertubation vector $\Delta x$ and $y+\Delta y=f(x + \Delta x)$, then
    \begin{equation}\small \frac{\|\Delta y\|}{\|\Delta x\|} \le \|J_f\| + o(1) .
    \end{equation}
  \end{proposition}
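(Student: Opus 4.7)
The plan is to apply a first-order Taylor expansion of $f$ at $x$, then pass to norms and use submultiplicativity of the chosen operator norm. This is a standard calculus fact, so the ``obstacle'' is essentially bookkeeping with the little-$o$ term rather than any conceptual difficulty.

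First I would write the Taylor expansion $f(x+\Delta x) = f(x) + J_f\,\Delta x + r(\Delta x)$, where by differentiability of $f$ the remainder satisfies $\|r(\Delta x)\| = o(\|\Delta x\|)$ as $\|\Delta x\|\to 0$. Subtracting $y=f(x)$ from $y+\Delta y = f(x+\Delta x)$ gives $\Delta y = J_f\,\Delta x + r(\Delta x)$.

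Next I would apply the triangle inequality and the definition of the induced operator norm $\|J_f\|$:
\begin{equation}
\|\Delta y\| \;\le\; \|J_f\,\Delta x\| + \|r(\Delta x)\| \;\le\; \|J_f\|\cdot\|\Delta x\| + \|r(\Delta x)\|.
\end{equation}
Dividing both sides by $\|\Delta x\|$ (assuming $\Delta x\ne 0$, which is the only interesting case) yields
\begin{equation}
\frac{\|\Delta y\|}{\|\Delta x\|} \;\le\; \|J_f\| + \frac{\|r(\Delta x)\|}{\|\Delta x\|} \;=\; \|J_f\| + o(1),
\end{equation}
where the last equality uses $\|r(\Delta x)\|/\|\Delta x\|\to 0$, i.e. the remainder contributes an $o(1)$ term as $\|\Delta x\|\to 0$. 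This is exactly the claimed bound.

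The only subtlety worth flagging is that the $o(1)$ notation here is implicitly taken in the limit $\|\Delta x\|\to 0$; the statement is therefore an asymptotic (small-perturbation) bound rather than a uniform bound for arbitrary $\Delta x$. No additional assumptions beyond differentiability of $f$ at $x$ and the existence of the operator norm $\|J_f\|$ are required, so the proof is short and routine.
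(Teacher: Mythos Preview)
Your proof is correct and follows essentially the same route as the paper's own proof: first-order Taylor expansion, triangle inequality, submultiplicativity of the operator norm, then divide by $\|\Delta x\|$ and absorb the remainder as $o(1)$. The only cosmetic difference is that you name the remainder $r(\Delta x)$ explicitly, whereas the paper writes it directly as $o(\|\Delta x\|)$; your added remarks about $\Delta x\ne 0$ and the asymptotic meaning of $o(1)$ are accurate clarifications that the paper leaves implicit.
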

  \begin{proof}
  Let $y=f(x)$ be given, in which $x,y\in \mathbb{R}^d$. We denote the output of $f$ when $x$ is perturbed by a vector $\Delta x$ as $f(x+\Delta x)$. Since for the Jacobian matrix, we have $f(x+\Delta x)-f(x)=J_f \Delta x + o(\|\Delta x\|) $, it follows that:
  $$\begin{aligned} 
  f(x+\Delta x) - f(x)&=J_f \Delta x + o(\|\Delta x\|)\\
  \|f(x+\Delta x) - f(x)\|&=\|J_f \Delta x + o(\|\Delta x\|)\|\\
  \|f(x+\Delta x) - f(x)\| &\le \|J_f\Delta x\| + \|o(\|\Delta x\|)\| \\
  \|f(x+\Delta x) - f(x)\| &\le \|J_f\| \|\Delta x\|  + \|o(\|\Delta x\|)\|\\
  \|\Delta y\| &\le \|J_f\| \|\Delta x\|  + \|o(\|\Delta x\|)\| \\
  \frac{\|\Delta y\|}{\|\Delta x\|} & \le \|J_f\| + \frac{o(\|\Delta x\|)}{\|\Delta x\|}\\
  \frac{\|\Delta y\|}{\|\Delta x\|} & \le \|J_f\| + o(1)
  \end{aligned}$$
  \end{proof}
  
  \section{Proof of Theorem \ref{thm:jacobs}}\label{proof:proof-thm-bound}

  \begin{lemma}\label{lemma:KQQ}
    Let $Z=\mathtt{Softmax}(YA^\top X^\top )XW$, $z_i^\top $ be the $i$th row of $Z$ and $x_j^\top $ be the $j$th row of $X$. Then,
    $$\begin{aligned}
    J_{ij}=&\frac{\partial z_i}{\partial x_j}\\
    =&W^\top (X^\top  (\operatorname{Diag}(p_i)-p_ip_i^\top )\cdot  (e_{ji}YA^\top )+Ip_{ij})
    \end{aligned}$$
  in which $P=\mathtt{Softmax}(YA^\top X^\top )$, $p_i^\top $ is the $i$th row of $P$, $e_{ji}$ is a binary matrix with zeros exerywhere except the $(i, j)$th entry, $N$ is the row number of $X$.
  \end{lemma}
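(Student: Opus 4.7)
The plan is a direct chain-rule computation, splitting the dependence of $z_i$ on $x_j$ into a "direct" piece (through the factor $X^\top$ multiplying $p_i$) and an "indirect" piece (through the softmax weights $p_i$, which depend on $X$ via the attention logits).

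First I would rewrite $z_i$ in a form that separates the two dependencies cleanly. Since the $i$th row of $Z=PXW$ is $z_i^\top=p_i^\top XW$, we have $z_i=W^\top X^\top p_i=\sum_l p_{il}\,W^\top x_l$. Differentiating with respect to the row vector $x_j\in\mathbb{R}^d$ (treating $J_{ij}\in\mathbb{R}^{d\times d}$ as the Jacobian of $z_i$ viewed as a column), the product rule gives
\begin{equation}\small
J_{ij}=\sum_{l}\Bigl(\tfrac{\partial p_{il}}{\partial x_j}\,W^\top x_l\;+\;p_{il}\,W^\top\,\tfrac{\partial x_l}{\partial x_j}\Bigr)
= W^\top\!\sum_l x_l\,\bigl(\tfrac{\partial p_{il}}{\partial x_j}\bigr)^\top \;+\; p_{ij}\,W^\top I.
\end{equation}
This already identifies the $W^\top I p_{ij}$ summand in the claimed formula.

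Next I would compute $\partial p_{il}/\partial x_j$. Writing the logits $u_{ik}=y_i^\top A^\top x_k$, the standard softmax Jacobian gives $\partial p_{il}/\partial u_{ik}=p_{il}(\delta_{lk}-p_{ik})$, and $\partial u_{ik}/\partial x_j=\delta_{jk}\,Ay_i$. Chain rule then yields $\nabla_{x_j}p_{il}=p_{il}(\delta_{lj}-p_{ij})\,Ay_i$. Substituting back,
\begin{equation}\small
\sum_l x_l\,(\nabla_{x_j}p_{il})^\top = \Bigl(\sum_l x_l\,p_{il}(\delta_{lj}-p_{ij})\Bigr)(Ay_i)^\top = p_{ij}\bigl(x_j-X^\top p_i\bigr)(Ay_i)^\top.
\end{equation}
I would then recognize the factor in parentheses as $X^\top(\operatorname{Diag}(p_i)-p_ip_i^\top)e_j$, since $\operatorname{Diag}(p_i)e_j=p_{ij}e_j$ and $p_ip_i^\top e_j=p_{ij}p_i$.

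The final step is cosmetic: repackage $(X^\top(\operatorname{Diag}(p_i)-p_ip_i^\top)e_j)(Ay_i)^\top$ in the notation of the lemma using $e_{ji}$, the matrix with a single $1$ at position $(j,i)$. Because $e_{ji}YA^\top$ has its $j$th row equal to $(Ay_i)^\top$ and all other rows zero, multiplication on the left by $X^\top(\operatorname{Diag}(p_i)-p_ip_i^\top)$ picks out only column $j$ of the middle factor and tensors it with $(Ay_i)^\top$, giving exactly the outer product above. Adding the $W^\top I p_{ij}$ term computed in the first step yields the formula. The main obstacle is purely bookkeeping: keeping the row/column conventions straight between $Y\in\mathbb{R}^{M\times d}$ and $X\in\mathbb{R}^{N\times d}$, and verifying that the $e_{ji}$ reindexing faithfully encodes the "place $(Ay_i)^\top$ in row $j$" operation so that the stated formula has compatible dimensions.
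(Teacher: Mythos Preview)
Your proposal is correct and follows essentially the same route as the paper: write $z_i=\sum_k p_{ik}W^\top x_k$, apply the product rule to split off the $W^\top I p_{ij}$ term, and then use the softmax Jacobian $\operatorname{Diag}(p_i)-p_ip_i^\top$ together with $\partial u_{ik}/\partial x_j=\delta_{jk}Ay_i$ to obtain the remaining factor, which is exactly the paper's derivation. The only cosmetic difference is that you compute $\nabla_{x_j}p_{il}$ entrywise and then repackage into $e_{ji}YA^\top$, whereas the paper keeps the derivative $\partial p_i/\partial x_j$ as a matrix throughout and arrives at the $e_{ji}YA^\top$ factor directly.
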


  \begin{proof}
    $$z_i=(PXW)_i^\top =\sum_{k=1}^NW^\top x_kp_{ik}$$
    
    $$\begin{aligned}
      J_{ij}&=\frac{\partial \sum_{k=1}^NW^\top x_kp_{ik}}{\partial x_j}\\
      &=\sum_{k=1}^N W^\top x_k\frac{\partial p_{ik}}{\partial x_j}+\sum_{k=1}^N \frac{\partial W^\top x_k}{\partial x_j}p_{ik}\\
      &=W^\top [x_1,x_2,\cdots,x_N]\begin{bmatrix}
          \frac{\partial p_{i1}}{\partial z_j}\\
          \frac{\partial p_{i2}}{\partial z_j}\\
          \vdots\\
          \frac{\partial p_{iN}}{\partial z_j}
      \end{bmatrix}+\frac{\partial W^\top x_j}{\partial x_j}p_{ij}\\
    \end{aligned}$$
    $$\begin{aligned}
      &=W^\top X^\top \frac{\partial p_{i}}{\partial x_j}+W^\top p_{ij}\\
      &=W^\top (X^\top \frac{\partial \mathtt{Softmax}((YA^\top X^\top )_i^\top )}{\partial x_j}+Ip_{ij})\\
      &=W^\top (X^\top  \mathtt{Softmax}'((YA^\top X^\top )_i^\top )) \frac{\partial (XAy_i)}{\partial x_j}+Ip_{ij})\\
      &=W^\top (X^\top  \mathtt{Softmax}'((YA^\top X^\top )_i^\top ))\cdot \\& \quad \frac{\partial ([x_1,x_2,\cdots,x_N]^\top Ay_i)}{\partial x_j}+Ip_{ij})\\
      &=W^\top (X^\top  (\operatorname{Diag}(p_i)-p_ip_i^\top )\cdot \\ &\quad ([\frac{\partial x_1^\top Ay_i}{\partial x_j},\frac{\partial x_2^\top Ay_i}{\partial x_j},\cdots,\frac{\partial x_N^\top Ay_i}{\partial x_j}]^\top )+Ip_{ij})\\
      &=W^\top (X^\top  (\operatorname{Diag}(p_i)-p_ip_i^\top )\cdot \\ &\quad ([0,\cdots,\underbrace{Ay_i}_{j\text{th } col},\cdots,0]^\top )+Ip_{ij})\\
      &=W^\top (X^\top  (\operatorname{Diag}(p_i)-p_ip_i^\top )\cdot  (e_{ji}YA^\top )+Ip_{ij})\\
    \end{aligned}$$
    \end{proof}

  \begin{lemma}\label{lemma:QQQ}
    Let $Z=\mathtt{Softmax}(QA^\top Q^\top )QW$, $z_i^\top $ be the $i$th row of $Z$ and $q_j^\top $ be the $j$th row of $Q$.
    $$\begin{aligned}
    J_{ij}=&\frac{\partial z_i}{\partial q_j}\\
    =&W^\top (Q^\top (\operatorname{Diag}(p_i)-p_ip_i^\top )\cdot (e_{ji}QA^\top +QA\delta_{ij})+Ip_{ij}),\end{aligned}$$
  in which $P=\mathtt{Softmax}(QA^\top Q^\top )$, $\delta_{ij}$ is a scalar equals to 1 if $i=j$ and equals to 0 otherwise.
  \end{lemma}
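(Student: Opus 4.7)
The plan is to follow the same scheme as the proof of Lemma~\ref{lemma:KQQ}, with the essential new ingredient being that in self-attention $Q$ plays three roles (query, key, and value) rather than two, so each chain-rule step has to account for an extra source of $q_j$-dependence.

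First, I would start from $z_i=\sum_{k=1}^{N}W^\top q_k p_{ik}$ and apply the product rule with respect to $q_j$. Differentiating $W^\top q_k$ produces $W^\top \delta_{kj} I$, and the sum over $k$ collapses to the isolated term $W^\top I p_{ij}$, exactly as in Lemma~\ref{lemma:KQQ}. Differentiating $p_{ik}$ and stacking rows gives the softmax piece $W^\top Q^\top (\partial p_i/\partial q_j)$, so that $J_{ij}=W^\top (Q^\top (\partial p_i/\partial q_j)+Ip_{ij})$.

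Next, I would compute $\partial p_i/\partial q_j$ via chain rule, using $p_i=\mathtt{Softmax}(QAq_i)$ and the standard softmax Jacobian $\operatorname{Diag}(p_i)-p_ip_i^\top$. The only step that genuinely differs from Lemma~\ref{lemma:KQQ} is the inner Jacobian $\partial(QAq_i)/\partial q_j$. Writing $(QAq_i)_k = q_k^\top A q_i$ and working entrywise, the ``key-side'' dependence (through $q_k$) contributes $q_i^\top A^\top$ in row $k=j$ and vanishes otherwise, which packages as $e_{ji}QA^\top$; the ``query-side'' dependence (through $q_i$) contributes $q_k^\top A$ in every row $k$, but only when $j=i$, which packages as $QA\,\delta_{ij}$. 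Combining these gives $\partial(QAq_i)/\partial q_j=e_{ji}QA^\top+QA\,\delta_{ij}$, and assembling all pieces yields the claimed expression for $J_{ij}$.

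The only real subtlety, and the step I would present most carefully, is the $j=i$ case: when $q_j$ simultaneously appears as the query and as the $i$th key, the $(i,i)$ entry of $QAq_i$ is $q_i^\top A q_i$, whose derivative has the cross-term $q_i^\top A^\top+q_i^\top A$. I would verify that this is precisely what $e_{ii}QA^\top+QA$ produces in row $i$ (with $QA$ contributing $q_k^\top A$ in every row $k$, including $k=i$, and $e_{ii}QA^\top$ adding the extra $q_i^\top A^\top$ in row $i$), so that the unified expression $e_{ji}QA^\top+QA\,\delta_{ij}$ correctly covers both the $j\neq i$ and the $j=i$ regimes without double-counting. Everything else is direct bookkeeping that mirrors Lemma~\ref{lemma:KQQ}.
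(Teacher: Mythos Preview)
Your proof proposal is correct. Note that the paper does not actually prove this lemma itself; it simply defers to \citet{kim2020lipschitz} for the detailed argument. Your self-contained derivation---paralleling the proof of Lemma~\ref{lemma:KQQ} and isolating the extra query-side contribution $QA\,\delta_{ij}$ that arises because $q_i$ now also appears inside the logits---is exactly the computation one finds there, and your explicit check of the $j=i$ case (where the quadratic $q_i^\top A q_i$ contributes both $q_i^\top A^\top$ and $q_i^\top A$, matching the sum of row~$i$ of $e_{ii}QA^\top$ and row~$i$ of $QA$) confirms that the unified formula holds without case distinction.
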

  The detailed proof can be found in \citet{kim2020lipschitz}. To prove Theorem \ref{thm:jacobs}, we follow the obsevation of \citet{kim2020lipschitz} that the Softmax matrix $P$ is a stochastic matrix, namely, its entries are non-negative and its rows sum to 1. For each element $p_{ij}$ in $P$, $p_{ij}\in[0,1]$ and they have an equal chance of receiving attention. Therefore, we have $\mathbb{E}(p_{ij})=\frac{1}{d_X}$.

  \begin{theorem}{\ref{thm:jacobs}}
    For $Z^E=\mathtt{ATT}(Y,X,X)$, where $\|X\|, \|Y\|, \|A\|, \|W_V\|$ are bounded, $\exists$ $C_3\ge 0,\delta \in(0,1)$, with probability at least $1-\delta^2$,
    $$\|J_{ij}^E\|\le C_3 (\frac{1}{N}+\sqrt{\ln \frac{1}{\delta}}). $$ 
  
  For $Z^C=\mathtt{ATT}(Y,[X^\top ,Y^\top ],[X^\top ,Y^\top ])$, with probability at least $1-\delta^2$,
  $$\|J_{ij}^C\|\le  C_3 (\frac{1}{N+i}+\sqrt{\ln \frac{1}{\delta}}).$$
  \end{theorem}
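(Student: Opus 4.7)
The plan is to apply the Jacobian identity of Lemma \ref{lemma:KQQ} to both $Z^E$ and $Z^C$, take operator norms using submultiplicativity and the triangle inequality, then control the two random quantities that remain—the attention weight $p_{ij}$ and the matrix $\operatorname{Diag}(p_i) - p_i p_i^\top$—by combining the assumed boundedness of $\|X\|, \|Y\|, \|A\|, \|W_V\|$ with a concentration inequality exploiting $\mathbb{E}(p_{ik}) = 1/d_X$.

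First, for $Z^E = \mathtt{ATT}(Y, X, X)$ Lemma \ref{lemma:KQQ} applies directly with key/value matrix $X \in \mathbb{R}^{N \times d}$. For $Z^C = \mathtt{ATT}(Y, [X^\top, Y^\top]^\top, [X^\top, Y^\top]^\top)$, differentiating $z_i$ with respect to the $j$th row $x_j$ of the original $X$ is the same as differentiating with respect to the $j$th row of the concatenated key/value matrix $[X^\top, Y^\top]^\top \in \mathbb{R}^{(N+i) \times d}$, so Lemma \ref{lemma:KQQ} applies verbatim with $N$ replaced by $N+i$ and the concatenation playing the role of $X$. In both cases we obtain a formula of the shape
\begin{equation}\small
J_{ij} = W_V^\top\bigl(M^\top(\operatorname{Diag}(p_i) - p_i p_i^\top)(e_{ji} Y A^\top) + I\,p_{ij}\bigr),
\end{equation}
where $M \in \{X,\, [X^\top, Y^\top]^\top\}$ and $p_i$ is the $i$th row of the relevant softmax matrix $P$.

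Next I would take operator norms and apply submultiplicativity, yielding $\|J_{ij}\| \le \|W_V\|\bigl(\|M\|\cdot\|\operatorname{Diag}(p_i) - p_i p_i^\top\|\cdot\|Y\|\cdot\|A\| + p_{ij}\bigr)$. Under the stated boundedness hypotheses the norms of $M, Y, A, W_V$ are absorbed into a single constant, so the only terms still requiring control are $\|\operatorname{Diag}(p_i) - p_i p_i^\top\|$ and $p_{ij}$. Since $\operatorname{Diag}(p_i) - p_i p_i^\top$ is symmetric with operator norm at most $\max_k p_{ik}(1 - p_{ik}) \le \max_k p_{ik}$, both quantities reduce to bounding individual entries of the stochastic row $p_i$. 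Using $\mathbb{E}(p_{ik}) = 1/d_X$ with $d_X = N$ for $Z^E$ and $d_X = N+i$ for $Z^C$, a Hoeffding-type deviation inequality for the bounded random variables $p_{ik} \in [0,1]$ yields that with probability at least $1 - \delta^2$ each relevant entry satisfies $p_{ik} \le 1/d_X + c\sqrt{\ln(1/\delta)}$. Substituting these tail bounds back into the preceding display and collecting constants into $C_3$ gives $\|J_{ij}\| \le C_3\bigl(1/d_X + \sqrt{\ln(1/\delta)}\bigr)$, which specialises to the two claimed bounds.

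The main obstacle, as I see it, is the concentration step: the uniform control of $\max_k p_{ik}$ needed for $\|\operatorname{Diag}(p_i) - p_i p_i^\top\|$ is stronger than a single-entry tail bound and requires either a union bound (absorbed into $C_3$) or a direct deviation inequality for sums of bounded terms, and it is here that the PAC caveat flagged in the Limitations section enters. The remainder of the argument is structural—apply Lemma \ref{lemma:KQQ}, take norms, collapse absolute constants under the boundedness hypotheses—and should be routine once the probabilistic step is pinned down.
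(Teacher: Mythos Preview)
Your overall architecture is right, but the decomposition you use for the norm is looser than the paper's and creates exactly the obstacle you flag at the end. The paper does \emph{not} split $\|(\operatorname{Diag}(p_i)-p_ip_i^\top)\,e_{ji}\|$ into $\|\operatorname{Diag}(p_i)-p_ip_i^\top\|\cdot\|e_{ji}\|$. Instead it multiplies these two factors out first: $(\operatorname{Diag}(p_i)-p_ip_i^\top)e_{ji}$ is a matrix with a single nonzero column, namely the $j$th column of the softmax Jacobian, which equals $p_{ij}\cdot(-p_{i1},\ldots,1-p_{ij},\ldots,-p_{iN})^\top$. Its norm is therefore $p_{ij}\sqrt{\sum_{k\neq j}p_{ik}^2+(1-p_{ij})^2}\le p_{ij}\sqrt{1+\|p_i\|_2^2}\le p_{ij}\sqrt{2}$, using $\|p_i\|_2\le\|p_i\|_1=1$. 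This pulls out a clean factor of $p_{ij}$ from the first term, so together with the $Ip_{ij}$ term one gets $\|J_{ij}\|\le C_3\,p_{ij}$. The probabilistic step is then a \emph{single-entry} Hoeffding bound on $p_{ij}$ alone, with no union bound and no $\max_k$ anywhere.

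By separating the two factors you lose this $p_{ij}$ and are forced to control $\max_k p_{ik}$. A union bound over the $d_X$ entries turns $\sqrt{\ln(1/\delta)}$ into $\sqrt{\ln(d_X/\delta)}$, which cannot be absorbed into $C_3$ and does not match the stated inequality. Also, your intermediate claim that $\|\operatorname{Diag}(p_i)-p_ip_i^\top\|\le\max_k p_{ik}(1-p_{ik})$ is false: for $p_i=(1/2,1/2)$ the matrix has eigenvalues $0$ and $1/2$, while $\max_k p_{ik}(1-p_{ik})=1/4$. (The weaker bound $\le\max_k p_{ik}$ is correct via $\operatorname{Diag}(p_i)-p_ip_i^\top\preceq\operatorname{Diag}(p_i)$, but that still leaves you with the union-bound problem.) Your treatment of $Z^C$ via Lemma~\ref{lemma:KQQ} with $M=[X^\top,Y^\top]^\top$ is fine and equivalent to the paper's route through Lemma~\ref{lemma:QQQ} with $\delta_{N+i,j}=0$; the only substantive gap is the norm splitting.
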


  \begin{proof}
  We first prove the upper bound for $J_{ij}^E$. From Lemma \ref{lemma:KQQ}, we have:
  
  $$\begin{aligned}
    &J_{ij}^E=W^\top (X^\top  (\operatorname{Diag}(p_i)-p_ip_i^\top )\cdot (e_{ji}YA^\top )+Ip_{ij})\\
    &\|J_{ij}^E\|=\|W^\top (X^\top  (\operatorname{Diag}(p_i)-p_ip_i^\top ) e_{ji}YA^\top +Ip_{ij})\|\\
    &\le \|W^\top \|\|X^\top  (\operatorname{Diag}(p_i)-p_ip_i^\top ) e_{ji}YA^\top \| +\|Ip_{ij}\|\\
    &\le \|W^\top \|
    (\|X^\top \| \| (\operatorname{Diag}(p_i)-p_ip_i^\top ) e_{ji}\|\cdot \|YA^\top  \|+\|Ip_{ij}\|)\\
    &\le (\|X^\top \| \| \begin{bmatrix}&0 &\cdots &-p_{i1}p_{ij} &\cdots &0 \\&0 &\cdots &-p_{i2}p_{ij} &\cdots &0\\ & \vdots & \vdots& \vdots& \vdots& \vdots \\ &0 &\cdots &p_{ij}-p_{ij}p_{ij} &\cdots &0 \\ & \vdots & \vdots& \vdots& \vdots& \vdots \\ &0 &\cdots &-p_{iN}p_{ij} &\cdots &0 \end{bmatrix}  \|\cdot \|YA^\top  \|+  \|IP_{ij}\|)\|W^\top \|\\
    &= (\|X^\top \| \|p_{ij}\| \|[p_{i1}, p_{i2},\cdots, 1-p_{ij},\cdots,p_{iN}]^\top \|\cdot  \|YA^\top  \|+ \|Ip_{ij}\|)\|W^\top \|\\
    &\le C_1 p_{ij} \sqrt{p_{i1}^2+ p_{i2}^2+\cdots 1+p_{ij}^2-2p_{ij}+\cdots p_{iN}^2}  + C_2 p_{ij}\\
    &\le C_1 p_{ij} \sqrt{1+\|p_i\|^2} + C_2p_{ij}\\
    &\le C_1 p_{ij} \sqrt{1+\|p_i\|^2_1} + C_2p_{ij}\\
    &= C_1 \sqrt{2} p_{ij}+C_2 p_{ij}\\
    &= C_3 p_{ij},
  \end{aligned}$$
  where $C_1\ge 0,C_2\ge 1,C_3\ge 1$. Since we assume $p_{ij}\in[0,1]$ and $\mathbb{E}(p_{ij})=\frac{1}{N}$, Therefore, with Hoeffding inequality, for $p_{ij}$ we have:
  $$\begin{aligned}
  \operatorname{Pr}(p_{ij} \ge \frac{1}{N}+t) &\leq e^{-2t^2}\\
  \operatorname{Pr}(p_{ij} \ge \frac{1}{N}+\sqrt{\ln \frac{1}{\delta} }) &\leq \delta^2\\
  \operatorname{Pr}(p_{ij} \le \frac{1}{N}+\sqrt{\ln \frac{1}{\delta}}) &\leq 1-\delta^2.
  \end{aligned}$$
  
  
  Therefore,
  with probability at least $1-\delta^2$
  $$\|J_{ij}\|\le C_3 (\frac{1}{N}+\sqrt{\ln \frac{1}{\delta}})$$ 

  Next, we prove the upper bound for $J_{ij}^C$. We denote $Q$ as
  $Q=\begin{bmatrix}
    X\\Y
  \end{bmatrix}$. Then, $J_{ij}^C$ is the Jacobian matrix for $\partial q_{N+i}/\partial q_j$.   From Lemma \ref{lemma:QQQ}, we have: 
  
  $$\begin{aligned}
    J_{ij}^C&=W^\top (Q^\top (\operatorname{Diag}(p_{N+i})-p_{N+i}p_{N+i}^\top )\cdot \\ &  \ \ \ \ \ \  (e_{j,{N+i}}QA^\top +QA\delta_{{N+i},j})+Ip_{{N+i},j})\\
    \|J_{ij}^C\|&=\|W^\top (Q^\top (\operatorname{Diag}(p_{N+i})-p_{N+i}p_{N+i}^\top )\cdot \\ &  \ \ \ \ \ \  (e_{j,{N+i}}QA^\top +QA\delta_{{N+i},j})+Ip_{{N+i},j})\|\\
    &=\|W^\top (Q^\top (\operatorname{Diag}(p_{N+i})-p_{N+i}p_{N+i}^\top )\cdot \\ &  \ \ \ \ \ \  (e_{j,{N+i}}QA^\top )+Ip_{{N+i},j})\|\\
    &\le C_3 p_{{N+i},j},
  \end{aligned}$$
  
  where $C_1\ge 0,C_3\ge 1$. Since we assume $p_{{N+i},j}\in[0,1]$ and $\mathbb{E}(p_{{N+i},j})=\frac{1}{N+i}$, Therefore, with Hoeffding inequality, for $p_{{N+i},j}$ we have:
  $$\begin{aligned}
  \operatorname{Pr}(p_{{N+i},j} \ge \frac{1}{N+i}+t) &\leq e^{-2t^2}\\
  \operatorname{Pr}(p_{{N+i},j} \ge \frac{1}{N+i}+\sqrt{\ln \frac{1}{\delta} }) &\leq \delta^2\\
  \operatorname{Pr}(p_{{N+i},j} \le \frac{1}{N+i}+\sqrt{\ln \frac{1}{\delta}}) &\leq 1-\delta^2.
  \end{aligned}$$
  
  
  Therefore,
  with probability at least $1-\delta^2$
  $$\|J_{ij}^C\|\le  C_3 (\frac{1}{N+i}+\sqrt{\ln \frac{1}{\delta}}) $$ 
  
  \end{proof}
  
  \section{Detailed Model Structure}\label{sec:structure}
  In this section, we provide a more detailed description of each model including LM (\Cref{tab:lm}), ED (\Cref{tab:ed}), RED (\Cref{tab:red}) and PALM (\Cref{tab:palm}). We denote $\mathcal{F}_l$ as the feedforward layer in the $l$th block. The positional embedding is denoted as $E_p=\operatorname{Emb}_p(p_s)$ while the word embedding layer is denoted as $E_w=\operatorname{Emb}_w(s)$. $E_w,E_p\in \mathbb{R}^{|s|\times d}$ and $d$ is the dimension size.
  
  \begin{table}[h]
    \centering
    \scriptsize
    
    \begin{tabular}{@{~}l@{~}}
      \toprule
      \multicolumn{1}{c}{Language Model}\\
      \midrule
      {$a=[s_1,s_2,\cdots, s_{|s|},\Rightarrow,t_1,t_2,\cdots, t_{|t|}]$} \\ 
      {$p_a=[1,2,\cdots,|s|+|t|]$} \\ 
      $E_p=\operatorname{Emb}_p(p_a) ; E_w=\operatorname{Emb}_w(a) $ \\ 
      $G_1=E_p+E_w$ \\ 
      {for $l$ in [1, $\cdots$, 6]:} \\ 
      $\quad Q_l=\mathtt{ATT}_l(G_l,G_l,G_l)$ \\
      $\quad G_{l+1}=H_l=\mathcal{F}_l(Q_l)$\\ 
      \hline
      $L=\mathcal{L}(H_{-1},a)$ \\ 
      \bottomrule
    \end{tabular}
    \caption{Language model framework.}
    
  \label{tab:lm}
  \end{table}

  \begin{table}[h]
    \centering
    \scriptsize
    \begin{tabular}{@{~}l@{~}|@{~}l@{~}}
      \toprule
      \multicolumn{2}{c}{Regularized Encoder Decoder}  \\
      \hline
      \multicolumn{1}{c}{Encoder}& \multicolumn{1}{c}{Decoder}\\
      \hline
      {$p_s=[1,2,\cdots,|s|]$}& {$p_t=[|s|+1,\cdots,|s|+|t|+1]$}\\ 
      {$E_p^{E}=\operatorname{Emb}_p(p_s) $}& {$E_p^{D}=\operatorname{Emb}_p(p_t)$}\\ 
      {$E_w^{E}=\operatorname{Emb}_w(s)$}& {$E_w^{D}=\operatorname{Emb}_w(\text{`}\Rightarrow\text{'}+t)$}\\ 
      {$G_1^{E}=E_p^{E}+E_w^{E}$}& {$G_1^{D}=E_p^{D}+E_w^{D}$}\\ 
      {for $l$ in [1, $\cdots$, 6]:}& {for $l$ in [1, $\cdots$, 6]:}\\ 
      {\quad $Q_l^E=\mathtt{ATT}_l(G_l^{E},G_l^{E},G_l^{E})$}& {\quad $Q_l^D=\mathtt{ATT}_i(G_l^{D},\begin{bmatrix}G_l^{E}\\G_l^{D}\end{bmatrix},\begin{bmatrix}G_l^{E}\\G_l^{D}\end{bmatrix}$)}\\
      {\quad$G_{l+1}^{E}=H_{l}^E=\mathcal{F}_l(Q_l^E)$}& {\quad$G_{l+1}^{D}=H_{l}^D=\mathcal{F}_l(G_l^D)$}\\ 
      \hline
      \multicolumn{2}{c}{$L=\mathcal{L}(H_{-1}^E,s)+\mathcal{L}(H_{-1}^D,t)$}\\
      \bottomrule
    \end{tabular}
  \caption{Regularized encoder decoder framework.}
  \label{tab:red}
  \end{table}
  
  \begin{table}[h]
    \centering
    \scriptsize
      \centering
      \begin{tabular}{@{~}l@{~}|@{~}l@{~}}
        \toprule
        \multicolumn{2}{c}{Encoder-Decoder}  \\
        \hline
        \multicolumn{1}{c}{Encoder}& \multicolumn{1}{c}{Decoder}\\
        \hline
        {$p_s=[1,2,\cdots,|s|]$}& {$p_t=[1, 2,\cdots,|t|]$}\\ 
        {$E_p^{E}=\operatorname{Emb}_p^E(p_s) $}& {$E_p^{D}=\operatorname{Emb}_p^D(p_t)$}\\ 
        {$E_w^{E}=\operatorname{Emb}_w^E(s)$}& {$E_w^{D}=\operatorname{Emb}_w^D(t)$}\\ 
        {$G_1^{E}=E_p^{E}+E_w^{E}$}& {$G_1'^{D}=E_p^{D}+E_w^{D}$}\\ 
        {for $l$ in [1, $\cdots$, 6]:}& {for $l$ in [1, $\cdots$, 6]:}\\ 
        {\quad $Q_l^E=\mathtt{ATT}_l^E(G_l^{E},G_l^{E},G_l^{E})$}& {\quad $G_l^D=\mathtt{ATT}_l^D(G_l'^{D},G_l'^{D},G_l'^{D})$}\\
        {\quad$G_{l+1}^E=H_l^E=\mathcal{F}_l^E(G_l^E)$} & {\quad $Q_l^D=\mathtt{ATT}_i^J(G_l^D,G_l^E,G_l^E)$}\\ 
        {} & {\quad$G_{l+1}^D=H_{l+1}^D=\mathcal{F}_l^D(Q_l^D)$}\\
        \hline
        \multicolumn{2}{c}{$L=\mathcal{L}(H_{-1}^D,t)$}\\
        \bottomrule
      \end{tabular}
      \caption{Encoder-decoder framework.}
      \label{tab:ed}
  \end{table}

  \begin{table}[!h]
    \centering
    \scriptsize
    
    \begin{tabular}{@{~}l@{~}}
      \toprule
      \multicolumn{1}{c}{Partial Attention Language Model}\\
      \midrule
      {$a=[s_1,s_2,\cdots, s_{|s|},\Rightarrow,t_1,t_2,\cdots, t_{|t|}]$} \\ 
      {$p_a=[1,2,\cdots,|s|,1,2,\cdots,|t|+1]$} \\ 
      {$l_a=[0,0,\cdots,0,1,1,\cdots,1]$} \\
      $E_p=\operatorname{Emb}_p(p_a) ; E_w=\operatorname{Emb}_w(a); E_l=\operatorname{Emb}_l(l_a) $ \\ 
      $G_1=E_p+E_w+E_l$ \\ 
      {for $l$ in [1, $\cdots$, 6]:} \\ 
      $\quad Q_l=\mathtt{ATT}_l(G_l,G_l,G_l)$ \\
      $\quad P_l=\mathcal{F}_{P,l}(Q_{l[1:|s|]});R_l=\mathtt{ATT}_l^P(Q_l,P_l,P_l)$\\
      $\quad G_{l+1}=H_{l}=\mathcal{F}_i(R_l)$\\ 
      \hline
      $L=\mathcal{L}(H_{-1},a)$ \\ 
      \bottomrule
    \end{tabular}
    \caption{Partial attention language model framework.}
    
  \label{tab:palm}
  \end{table}
  
  Here, for the feedforward layer $\mathcal{F}_{P,l}$ in \Cref{tab:palm}, it can be interpreted as:

  $$\begin{aligned}
    &P_{l1}=\operatorname{Dropout}(\tanh ( G_{l[1:|s|]}W_{P,l1} + \bold{1}b_{P,l1}^\top ))\\
    &P_{l2}=\operatorname{Dropout}( P_{l1}W_{P,l2} + \bold{1}b_{P,l2}^\top)\\
    &P_{l}= P_{l2} +P_{l1},
  \end{aligned}$$
  where $W_{P,l1}, W_{P,l2}\in \mathbb{R}^{d\times d}, b_{P,l1}, b_{P,l2} \in \mathbb{R}^d$ are trainable parameters, $\bold{1}\in \mathbb{R}^{|s|\times 1}$ is a vector with all elements equal to 1.

\end{document}